\newcommand{\R}{\mathbb{R}} 
\renewcommand{\P}[1]{\mathbb{P}\left(#1 \right)} 
\newcommand{\E}[1]{\mathbb{E}\left[ #1 \right]} 
\newcommand{\N}{\mathbb{N}} 
\renewcommand{\S}{\mathbb{S}} 
\newcommand{\vq}{\mathbf{q}}
\newcommand{\vqp}{\mathbf{q'}}
\renewcommand{\E}{\mathbb{E}}
\newcommand{\wQ}{\widehat{Q}}
\newcommand{\pbr}[1]{\left( #1\right)}
\newcommand{\sbr}[1]{\left[ #1\right]}
\newcommand{\cbr}[1]{\left\{ #1\right\}}
\renewcommand{\S}{\mathcal{S}}
\newcommand{\A}{\mathcal{A}}
\renewcommand{\P}{\mathbb{P}}
 \newtheorem{theorem}{Theorem}[section]
\newtheorem{lemma}[theorem]{Lemma}
\newtheorem{assumption}[theorem]{Assumption}
\DeclareMathOperator*{\argmin}{arg\,min}
\title{Performance of NPG in Countable State-Space Average-Cost RL}
\author{%
  Yashaswini Murthy \\ 
  Electrical and Computer Engineering\\
  University of Illinois Urbana-Champaign\\
  Urbana, IL 61801 \\
  \texttt{ymurthy2@illinois.edu} \\
   \And
   Isaac Grosof \\
   Electrical and Computer Engineering \\
   University of Illinois Urbana-Champaign \\
   Urbana, IL 61801 \\
   \texttt{igrosof@illinois.edu}
   \And
   Siva Theja Maguluri \\
   Industrial and Systems Engineering \\
   Georgia Institute of Technology \\
   Atlanta, GA 30332 \\
   \texttt{siva.theja@gatech.edu}
   \And
   R. Srikant \\
   Electrical and Computer Engineering \\
   University of Illinois Urbana-Champaign \\
   Urbana, IL 61801 \\
   \texttt{rsrikant@illinois.edu}
   }
\begin{document}

\maketitle

\begin{abstract}
   We consider policy optimization methods in reinforcement learning settings where the state space is arbitrarily large, or even countably infinite. The motivation arises from control problems in communication networks, matching markets, and other queueing systems. Specifically, we consider the popular Natural Policy Gradient (NPG) algorithm, which has been studied in the past only under the assumption that the cost is bounded and the state space is finite, neither of which holds for the aforementioned control problems. Assuming a Lyapunov drift condition, which is naturally satisfied in some cases and can be satisfied in other cases at a small cost in performance, we design a state-dependent step-size rule which dramatically improves the performance of NPG for our intended applications. In addition to experimentally verifying the performance improvement, we also theoretically show that the iteration complexity of NPG can be made independent of the size of the state space. The key analytical tool we use is the connection between NPG stepsizes and the solution to Poisson's equation. In particular, we provide policy-independent bounds on the solution to Poisson's equation, which are then used to guide the choice of NPG stepsizes.
\end{abstract}

\section{Introduction}

We are motivated by control problems in queueing models of resource allocation,
such as those arising in communication networks, cloud computing systems, and riding hailing services. Examples of such systems include the following:
\begin{enumerate}[leftmargin=*,label=(\alph*)]
    \item The switch fabric in Internet routers and data centers where packets have to be transported (or switched) from one of many input ports to one of many output ports \cite{srikant2013communication}: the system is modeled as a bipartite graph with input ports on one side and output ports on the other side. Technological constraints dictate that at each time slot,
    a matching must be selected in the bipartite graph,
    and packets are transferred along the edges of the matching from each input to the corresponding output.
    The goal is to find a sequence of matchings to minimize either the average delay experienced by the packets in the switch or the probability that the delay exceeds some threshold.
    \item Scheduling problems at base stations in 5G networks \cite{srikant2013communication}: at a central controller (typically the base station associated with a cell in a cellular network), packets arrive and are queued in a separate queue for each receiver.
    The goal is to schedule these packets over different frequencies and time slots to minimize the average delay of the packets in the system,
    while taking into account the time-varying channel conditions in a wireless network due to fading and other wireless medium effects. 
    \item Scheduling workloads in cloud computing systems \cite{mao2019learning}:
    a workload in such systems takes the form of a collection directed acyclic graphs,
    where each DAG represents a job, the nodes in the graphs represent tasks in the job and the directed edges represent precedence relationships among the tasks in the graph. The goal is to allocate resources to tasks from a sequence of arriving jobs,
    while respecting the precedence relationships of the tasks within each job and minimizing the average delay experienced by the jobs.
    \item Customer-driver matching in ride hailing platforms such as Uber and Lyft \cite{ozkan2020dynamic,varma2023dynamic}:  the role of such platforms can be modeled as controlling the number of nodes in a bipartite graph, where one side is the set of waiting customers and the other side in the set of available drivers. The goal of a ride hailing platform is to choose a set of prices and match customers to drivers so that a weighted combination of the average delay experienced by customers and the average profit is optimized. 
\end{enumerate}

The above problems exhibit several common features:
\begin{enumerate}[leftmargin=*,label=(\roman*)]
    \item The state space of these problems is discrete, typically consisting of the queue lengths of the various entities waiting in the system such as packets, customers, drivers, jobs and tasks, depending on the context. Discrete state spaces are commonly studied in the reinforcement learning (RL) literature; however, in our applications, the state space is also countably infinite for all practical purposes, since queue lengths can become unbounded. In some applications, such as communication networks, the packet buffers may be finite but it is well known that modeling them as infinite buffers leads to good scheduling algorithms \cite{srikant2013communication}. It should be noted that even if one were to model the finiteness of the buffers explicitly in our model, our results will still hold, and our performance guarantees would not depend on the size of the buffers.
    \item Because we are dealing with a vector of queue lengths as the state of the system, the problems have some limited amount of structure that can and should be exploited to design good algorithms. In particular, it is relatively straightforward to design algorithms that ensures that the system is stable, i.e., the queue length is finite with probability one \cite{srikant2013communication}. On the other hand, algorithms to optimize performance objectives such as average delay are unknown except in limited regimes \cite{eryilmaz2012asymptotically,maguluri2016heavy}. Therefore, data-driven approaches such as reinforcement learning (RL) are natural candidates to solve such problems. 
    \item Due to a well-known result called Little's law, minimizing average delay is equivalent to minimizing average queue lengths \cite{little2008little}. Thus, the natural instantaneous cost in such problems is the current total queue length. Note that unlike many RL models, this cost is unbounded and results which assume that the costs (or rewards) at each (state, action) pair are uniformly bounded do not hold for our problems.
\end{enumerate}

Given the above background, our goal in this paper is to study policy optimization algorithms for such countable state space models with discrete, finite action spaces where the cost is proportional to the total queue length in the system, and can thus grow in an unbounded fashion. For this purpose, we study the natural policy gradient (NPG) algorithm. Our main contributions are the following:
\begin{enumerate}[leftmargin=*,label=(\arabic*)]
    \item \textbf{Algorithmic Contribution:} A standard regret analysis for NPG relies on its connection to a classic learning theory problem known as the best-experts problem. However, we demonstrate that this analysis doesn't hold in our case due to the unbounded nature of the instantaneous cost in our setting. By making a small but crucial adjustment to the step size used in the best-experts algorithm and leveraging bounds on the relative value function (also known as the solution to Poisson's equation in applied probability), we establish nontrivial regret bounds. In addition to ensuring the convergence of NPG in countable state-space MDPs, this algorithmic adjustment significantly accelerates convergence in finite state MDPs compared to the fixed step-size NPG algorithm. Notably, prior work offered no heuristic for selecting an optimal fixed step size, often relying on hyperparameter tuning. Our approach, grounded in Poisson's equation, provides an effective heuristic for selecting both fixed and adaptive step sizes without the need for extensive parameter tuning. This improvement streamlines the application of NPG and enhances its overall performance.
    \item \textbf{Theoretical Contribution:} An important component of our work is to obtain bounds on the solution of Poisson's equation that are uniform across all policies. To the best of our knowledge, prior works on obtaining bounds on the solution of Poisson's equation are limited to specific policies. A key contribution of our paper is to show that uniform bounds can be obtained by exploiting certain structural properties of the mathematical models for the motivating applications mentioned earlier. \textcolor{black}{These bounds are essential for achieving final regret bounds that are independent of the state space cardinality.} 
    \item \textbf{Relaxation of learning error assumption:} Policy evaluation using temporal difference learning and Monte Carlo methods have been well studied in the literature, so we do not consider them explicitly in this paper. However, we do consider the error due to function approximation. Traditionally, for analytical purposes, it is assumed that there is a uniform bound on the function approximation error of the value function. \textcolor{black}{We argue that this assumption is not reasonable for countable state space models, particularly queuing models with unbounded instantaneous costs. Instead, we propose a more general model for function approximation, where the error bounds in learning are relaxed for states that are less frequently visited.} Existing mathematical tools for the study of convergence of RL algorithms cannot handle our proposed model for the function approximation error. However, we show that, by exploiting the special structure of our queueing models and the associated bounds on the solution to Poisson's equation, we can obtain non-trivial regret bounds for policy optimization.
    \item \textbf{Empirical Evaluation:}  \textcolor{black}{We evaluate the performance of our algorithmic modification in finite state space applications, focusing specifically on cloud computing scenarios driven by autoscaling. We conduct two sets of experiments where TD($\lambda$) is used to learn the value functions. 
    Utilizing our bounds on the solution to Poisson's Equation, we determine the fixed step size according to established theory and compare the performance of our algorithm against one that employs this fixed step size. Additionally, we conduct experiments in a noiseless environment to evaluate the robustness of our algorithm against learning errors. We empirically show the vast improvement in convergence when utilizing an adaptive state dependent step size to that of a fixed step size, where the former rate of convergence remains independent of the underlying state space cardinality. By demonstrating similar iteration complexities with and without noise, we validate the robustness of our algorithm within our framework of relaxed assumptions over learning error, which accommodates greater noise in the value function for less frequently visited states.}
\end{enumerate}

\subsection{Related Work}

The Natural Policy Gradient algorithm is a well-known and extensively studied algorithm for MDP optimization, in both the average-reward and discounted-reward settings \cite{kakade2001natural,agarwal2021theory,geist2019theory,murthy2023convergence,murthy2023performance}.
An important line of research on the NPG algorithm treats the MDP-optimization problem as many parallel instances of the expert advice problem, and treats the NPG algorithm as many parallel instances of the weighted majority algorithm.
Even-dar et al. \cite{even2009online} use this approach to prove the first convergence result for NPG in the finite-state average-reward tabular setting, and \cite{abbasi2019politex} expand upon that result to incorporate function approximation.
Our result uses the same ``parallel weighted majority'' framing, but generalizes the result to the infinite-state-space setting by incorporating state-dependent learning rates.

Policy gradient algorithms have been studied in certain specialized settings with average-reward uncountably-infinite state spaces \cite{fazel2018global,kunnumkal2008using}: the Linear Quadratic Regulator and the base-stock inventory control problem, demonstrating rapid convergence to the optimal policy.
However, follow-up study of these settings has demonstrated that they exhibit additional structure which is critical to these results, causing these policy-gradient algorithms to act like policy improvement algorithms \cite{bhandari2024global}.
Our result is the first to handle an infinite state-space setting without the specialized structure of these prior results.

Key to our result are novel bounds on the relative value function, building off of our drift assumption for the policy space. This drift assumption is reasonable in a queueing setting, as we discuss in \cref{disc:assumptions} \cite{srikant2013communication}.
Prior drift-based bounds on the relative-value function exist \cite{glynn1996liapounov}, but are policy-dependent. In contrast, we prove policy-independent bounds on the relative-value function using reasonable assumptions on the MDP structure,
which are motivated by the structure of MDPs in queueing networks. Our policy-independent bounds are critical to implement our state-dependent learning rates, allowing us to generalize the NPG algorithm to the infinite-state setting.

A variety of papers have studied applications of reinforcement learning to queueing problems, including policy-gradient-based algorithms.
Several such results focus on the problem of learning the relative value function from samples, including variance reduction techniques \cite{dai2022queueing} and sample augmentation techniques \cite{wei2023sample}.
Our results complement these results, as we focus on the function approximation step,
and prove results on overall algorithmic performance,
while these results focus on the policy evaluation step,
and empirically demonstrate performance improvements.
Dai and Gluzman \cite{dai2022queueing} in particular empirically demonstrate that with variance reduction techniques in use, policy gradient algorithms with function approximation rapidly converge to the optimal policy in an infinite-state-space queueing setting. Our results theoretically justify this empirical observation.

\section{Model and Preliminaries}
\textcolor{black}{
We consider the class of Markov Decision Processes (MDP) with countably infinite states $\S$, finite actions $\A$ and the infinite horizon average cost objective. We consider a randomized class of policies $\Pi,$ where a policy $\pi\in\Pi$ maps each state to a probability vector over actions $\A$, that is, $\pi:\S \to\Delta\A$. The state and action at time $t$ are denoted by $(\vq_t,a_t)$ respectively. The underlying probability transition kernel is denoted by $\P:\S\to\S$ and the transition kernel corresponding to any policy $\pi$ is denoted by $\P_\pi,$ where $\P_\pi(\mathbf{q'}|\vq) = \sum_{a\in\A}\pi(a|\vq)\P(\mathbf{q'}|\vq,a)$ is the probability of transitioning from $\vq$ to $\vqp$ under policy $\pi$ in a single step. Associated with each state $\vq$ and action $a$ is a single step cost $c(\vq,a)$ which is non-negative. Let $\underline{c}(\vq)=\min_{a\in\A} c(\vq,a)$ and $\overline{c}(\vq)=\max_{a\in\A}c(\vq,a)$ be the minimum and maximum instantaneous cost respectively in state $\vq$ across all actions $a\in\A$. The single step cost under policy $\pi$ at state $\vq$ is thus denoted by $c_\pi(\vq)$, where $c_\pi(\vq) = \sum_{a\in\A}\pi(a|\vq)c(\vq,a).$ When dealing with queuing systems this single step cost can correspond to total queue length, which can be unbounded thus yielding unbounded instantaneous costs. Hence this formulation relaxes the bounded single step costs assumption which is a common feature of many algorithms previously studied in literature \cite{abbasi2019politex,murthy2023performance}.} The infinite horizon average cost associated with a policy $\pi$ is denoted by $J_\pi$, and is defined as follows:
\begin{equation}
    J_\pi = \lim_{T\to\infty}\frac{\E_\pi\sbr{\sum_{t=0}^{T-1}c_\pi(\vq_t)}}{T}
\end{equation}
where the expectation is taken with respect to the trajectory generated by $\P_\pi.$ If the transition kernel $\P_\pi$ admits a unique stationary distribution $d_\pi$ over the state space, then the infinite horizon average reward can be reformulated as $ J_\pi = \sum_{\vq\in\S}d_\pi(\vq)c_\pi(\vq)$.
If a function $V_\pi:\S\to\R$ associated with a policy $\pi$ is absolutely integrable, that is it satisfies:
\begin{equation}
    \sum_{\vqp\in\S}\P_\pi(\vqp|\vq)|V_\pi(\vqp)|<\infty
\end{equation}
and is a solution to the Poisson's equation:
\begin{equation}
\label{poissons}
    J_\pi + V_\pi(\vq) = c_\pi(\vq)+ \sum_{\vqp\in\S}\P_\pi(\vqp|\vq)V_\pi(\vqp), \qquad \forall \vq\in\S
\end{equation}
then $V_\pi(\vq)$ is defined as the relative value function associated with the policy $\pi$ \cite{glynn2024solution}. Since $V_\pi(\vq)$ is unique upto an additive constant, any function of the form $V_\pi(\vq)+C$, where $C$ is a constant is also a solution to the Poisson's equation. However, the most frequently used representation of the value function, which is also unique, is given by:
\begin{equation}
\label{eq:valfunc_rec}
    V_\pi(\vq)=\E_\pi\sbr{\sum_{i=0}^{\tau^\pi_{\vq^*}-1}\pbr{c_\pi(\vq_i)-J_\pi}\Bigg|\vq_0=\vq}
\end{equation}
where $\tau^\pi_{\vq^*}$ represents the first time to hit state $\vq^*$ starting from any state $\vq$ under policy $\pi.$ Hence, from definition it follows that $V_\pi(\vq^*)=0.$ The value function associated with a state $\vq$ represents the expected difference between the total cost and the expected total cost obtained under policy $\pi$ when starting from state $\vq$ until state $\vq^*$ is reached for the first time.  
The relative state action value function $Q_\pi(\vq)$ is analogously defined as the solution to the following equation:
\begin{equation}
\label{eq:Qfunc}
    J_\pi + Q_\pi(\vq, a) = c(\vq,a) + \sum_{\vqp\in\S}\P(\vqp|\vq,a)V_\pi(\vqp)
\end{equation}
The state action value function $Q_\pi(\vq,a)$ has a similar interpretation as state value function $V_\pi(\vq)$ except the action enacted at time $0$ is $a$ and not dictated by the policy $\pi.$ 

The goal of reinforcement learning is to determine the policy $\pi^*\in\Pi,$ such that the infinite horizon average cost is minimized. That is, to solve for 
\begin{equation}
   J^* = \min_{\pi\in\Pi} J_\pi
\end{equation}
where $\pi^* = \argmin_{\pi\in\Pi} J_\pi$.
The focus of this paper is to analyze the performance of Natural Policy Gradient in determining the optimal policy that minimizes the infinite horizon average cost. 

\subsection{Natural Policy Gradient Algorithm}

Natural Policy Gradient algorithm is related to the mirror descent algorithm in the context of tabular policies. The objective of mirror descent involves minimizing the first order approximation of the average cost with $\mathsf{KL}$ regularizer. In the context of tabular policies, the NPG policy update we consider is of the form below:
\begin{equation}
\label{eq:upd}
    \pi_{i+1}(a|\vq) \propto \pi_i(a|\vq)\exp{\pbr{-\eta_\vq \widehat{Q}_{\pi_i}(\vq,a)}}
\end{equation}
where $\eta_\vq>0$ is the state dependent step size and $\widehat{Q}_{\pi_i}$ is the estimate of state action value function $Q_{\pi_i}$ learnt using policy evaluation algorithms. Since in the limit as $\eta_\vq\to\infty$, the above update picks the action with the lowest state action value function, NPG is also considered to be a form of soft policy iteration. The magnitude of $\eta_\vq$ determines the greediness of the policy. 

\textcolor{black}{In finite state spaces, previous literature generally considers a state-independent step size, $\eta$ {\cite{abbasi2019politex,murthy2023performance}}, where theoretical convergence guarantees require $$\eta \leq \min_{\substack{\pi\in\Pi\\\vq\in\S,a\in\A}}\frac{1}{\widehat{Q}_{\pi}(\vq,a)}.$$ However, as we will demonstrate, in the context of unbounded instantaneous costs, such as those encountered in queuing systems with infinite buffers, the value of $Q_\pi$ and its estimate $\widehat{Q}_{\pi}$ increases with the cardinality of the state space. This creates two issues with using a fixed step size: (i) Even in the context of finite state spaces, there are currently no clear guidelines for selecting this $\eta$, leading most practical applications to rely on hyperparameter tuning to find a value of $\eta$ that achieves some level of convergence; (ii) As the state space grows larger (even if finite), the value of $\eta$ required for algorithm convergence decreases, resulting in a sharp increase in the iteration complexity of NPG. In the following sections, we address how both of these issues can be mitigated by identifying policy-independent bounds on $\widehat{Q}_\pi$ and using them to determine a state-dependent step size $\eta_\vq$. This approach yields an NPG iteration complexity that is independent of the state space cardinality, providing non-trivial convergence bounds for countable state spaces as well.}

With the state space being infinitely large, a common approach to evaluate value functions is through linear function approximations. This simplifies the complexity from infinity to the dimension of the parameter vector, although with some loss in accuracy. A popular method involves using neural networks, where the weights act as the parameter vector and the network itself serves as the feature space. For each policy $\pi$, the estimate $\widehat{Q}_{\pi}$ of the state-action value function $Q_\pi$ is then computed using overparametrized neural networks and samples gathered from trajectories under policy $\pi$. For further details, please see Subsection~\ref{subsubsec:Evaluation}.


\IncMargin{1 em}
\begin{algorithm}[H] \label{alg: NPG}
\DontPrintSemicolon
\SetAlgoNlRelativeSize{-1}
\caption{Natural Policy Gradient Algorithm}
    \SetKwInOut{Inputs}{Require}
    \BlankLine
    \Inputs{ $T$, $\pi_0\in\Delta\A$}
    \BlankLine
    \For{$i = 0,1,2,3, \cdots, T-1$}{
            Generate trajectory $\cbr{\vq_0,a_0,\vq_1,a_1,\ldots,\vq_n,a_n}$ using policy $\pi_i$. Evaluate $\widehat{Q}_{\pi_i}$ using neural network linear function approximation.\;
            Update policy as:\;
            \begin{equation}
            \label{eq:update}
                \pi_{i+1}(a|\vq)=\frac{\pi_i(a|\vq)\exp{\pbr{-\eta_\vq \widehat{Q}_{\pi_i}(\vq,a)}}}{\sum_{a'\in\A}\pi_i(a'|\vq)\exp{\pbr{-\eta_\vq \widehat{Q}_{\pi_i}(\vq,a')}}}
            \end{equation}
    }
    \textbf{return} $\pi_T$
\end{algorithm}

To aide our analysis, we make the following assumptions, which are typically met by queuing systems. The irreducibility of the Markov chain under any policy is a standard assumption in reinforcement learning. This ensures adequate exploration and visitation of all state-action pairs, which is crucial for learning policies with reasonable confidence.  
\begin{assumption}
\label{asspn:asump1}
    For all policies $\pi\in\Pi,$ the induced Markov Chain $\P_\pi$ is irreducible.
\end{assumption}
In countable state Markov chains, irreducibility together with positive recurrence ensures the existence of the stationary distribution which aides in the proof of convergence of NPG. The next assumption ensures that 
the underlying Markov chain is positive recurrent (see Lemma~\ref{lem: positive_recurrence}).

\begin{assumption}
\label{asspn:asump2}
\textcolor{black}{
There exists a function $f:\S\to[0,\infty)$ and constants $\epsilon>0, g, D$ independent of policy $\pi$ such that for every policy $\pi\in\Pi$ and every $\vq\in\S,$ 
\begin{enumerate}
    \item The drift equation
    \begin{equation}
    \label{eq:drift}
        \E_\pi\sbr{f^2(\vq_{k+1})-f^2(\vq_k)|\vq_k=\vq}\leq -\epsilon\overline{c}(\vq)+g.
    \end{equation}
    is satisfied.
    \item Single step transitions are uniformly bounded, i.e.,
    \begin{equation}
    \label{eq:bdddiff}
    \left|f(\vq')-f(\vq)\right|\leq D \quad \forall \vq'\in\S: \P_\pi(\vq'|\vq)>0.
    \end{equation}
    \item The set 
        \begin{equation}
        \label{eq: set_B}
            B:= \cbr{\vq\in\S: \underline{c}(\vq)\leq \frac{2g}{\epsilon}},
        \end{equation}
        is finite and $f(\vq)>0$ if $\vq\in B^c.$
\end{enumerate}
}
\end{assumption}
We will call the function $f$ in the above assumption a Lyapunov function. In addition to ensuring positive recurrence, the drift equation~\eqref{eq:drift} gives a uniform bound on the
average cost of any policy.

It turns out that we also need policy independent bounds on the value function, which is ensured by our next assumption.
\begin{assumption}
\label{asspn:assump3}
We assume that there exist constants $T_B$ and $p_B$, independent of policy $\pi$, such that 
        \begin{equation}
        \label{eq:assump3}
            \P_\pi^{T_B}\pbr{\vqp|\vq} \geq p_B \qquad \qquad \forall \vq\in B, \forall \vqp\in B, \forall \pi\in\Pi,
        \end{equation}
        where $\P_\pi^{T_B}$ is the $T_B$-step probability transition matrix. 
\end{assumption}

Equation~\eqref{eq:assump3} requires that any state $\vq\in B$ can be reached from any state $\vqp\in B$  in at most $T_B$ transitions with atleast $p_B$ probability under any policy $\pi\in\Pi$. 
Equation~\ref{eq:bdddiff} states that it is not possible to move arbitrarily far away from the current state in a single transition under any policy. 


\section{Main Result and Discussion}
We now present the main result, which is the performance of NPG in the context of infinite state MDPs within the learning framework. We then contextualize Assumptions~\ref{asspn:asump1}, \ref{asspn:asump2} and \ref{asspn:assump3} and elaborate on how they can be satisfied in the context of queuing systems.
\subsection{Main Result}
\begin{restatable}{theorem}{maintheorem}
\label{thm:maintheorem}
Consider the sequence of policies $\pi_1,\pi_2,\ldots,\pi_T$ obtained from Algorithm~\ref{alg: NPG} with a state-dependent step size $\eta_\vq = \sqrt{\frac{8\log|\A|}{T}}\frac{1}{M_\vq}$, where $M_\vq = \pbr{2\delta(\vq) + \frac{2}{\epsilon}f^2(\vq)+\frac{4D}{\epsilon}+\overline{c}(\vq)+g_1}$ and $\delta(\vq):= \sup_{\pi\in\Pi}\left\|\wQ_{\pi_k}(\vq,a)-Q_{\pi_k}(\vq,a)\right\|_\infty$. Let $J_{\pi_k}$ be the average cost associated with policy $\pi_k$ and let $J_*$ be the minimum average cost across policy class $\Pi$. Let the learning error satisfy the following: 
    \begin{equation}
    \label{eq:eval_error}
        \E_\pi\sbr{\delta(\vq)}\leq\kappa(\vq) \qquad \forall \vq\in\S, \pi\in\Pi
    \end{equation}
Then, under Assumptions~\ref{asspn:asump1}, \ref{asspn:asump2} and \ref{asspn:assump3}, there exist constants $c', c''$ not depending on $T$ or $\pi_1,\pi_2,\ldots,\pi_T$ such that: 
    \begin{equation}
    \label{eq:main}
        \sum_{k=1}^T \E\pbr{J_{\pi_k} - J_*} \leq c'\sqrt{T} + c'' T
    \end{equation}
    where $c' = \sqrt{\frac{\log|\A|}{2}}\pbr{2\beta+\beta_1+\beta_2+\frac{g}{\epsilon}+g_1}$, $c''= 2\beta$, $\beta:=\E_{\vq\sim d_{\pi^*}}\sbr{\kappa(\vq)}$, $\beta_1 = \frac{4D}{\epsilon}\E_{\vq\sim d_{\pi^*}}\sbr{f(\vq)}$, $\beta_2 = \frac{2}{\epsilon}\E_{\vq\sim d_{\pi^*}}\sbr{f^2(\vq)}$ and $g_1 = \frac{2D^2}{\epsilon}+ (K+C_B+\frac{g}{\epsilon})\pbr{\frac{T_B}{p^2_B}}$.
\end{restatable}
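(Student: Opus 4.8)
The plan is to run the classical ``parallel weighted-majority'' analysis of NPG simultaneously at every state, with the state-dependent learning rate $\eta_\vq$ calibrated to a \emph{policy-independent} bound on the span of $Q_\pi(\vq,\cdot)$ coming from Poisson's equation. The first step is the average-cost performance difference identity: using $V_{\pi_k}(\vq)=\sum_{a}\pi_k(a|\vq)Q_{\pi_k}(\vq,a)$ (a consequence of \eqref{poissons} and \eqref{eq:Qfunc}) together with \eqref{eq:Qfunc} written for $\pi^*$ and taking stationary expectations under $d_{\pi^*}$, one gets
\[
J_{\pi_k}-J_* \;=\; \E_{\vq\sim d_{\pi^*}}\!\Big[\textstyle\sum_{a\in\A}\big(\pi_k(a|\vq)-\pi^*(a|\vq)\big)\,Q_{\pi_k}(\vq,a)\Big].
\]
Summing over $k=1,\dots,T$ and interchanging the countable state sum with the expectation (legitimate because the bracketed quantity is dominated by $T$ times the $d_{\pi^*}$-integrable bound $M_\vq$ established below) reduces the total regret to $\E_{\vq\sim d_{\pi^*}}[R_T(\vq)]$, where $R_T(\vq):=\sum_{k=1}^T\sum_a(\pi_k(a|\vq)-\pi^*(a|\vq))Q_{\pi_k}(\vq,a)$ is the regret at state $\vq$ of an online learner over the $|\A|$ actions.

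Next I would bound $R_T(\vq)$ for each fixed $\vq$. The update \eqref{eq:update} is exactly the exponential-weights (Hedge) algorithm over the $|\A|$ actions with loss vector $\wQ_{\pi_k}(\vq,\cdot)$ at round $k$ and learning rate $\eta_\vq$. Hedge is invariant to shifting the loss vector by a constant, so only the per-round span $\max_a\wQ_{\pi_k}(\vq,a)-\min_a\wQ_{\pi_k}(\vq,a)$ matters, and the third step below shows this span is at most $M_\vq$ for every $k$, every $\pi\in\Pi$, and every realization of the estimates. The standard Hedge bound then gives, with comparator $\pi^*(\cdot|\vq)$,
\[
\sum_{k=1}^T\sum_a\big(\pi_k(a|\vq)-\pi^*(a|\vq)\big)\wQ_{\pi_k}(\vq,a)\;\le\;\frac{\log|\A|}{\eta_\vq}+\frac{\eta_\vq TM_\vq^2}{8}\;=\;M_\vq\sqrt{\tfrac{T\log|\A|}{2}},
\]
the equality being exactly the prescribed choice $\eta_\vq=\sqrt{8\log|\A|/T}\,/M_\vq$. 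Replacing $\wQ_{\pi_k}$ by $Q_{\pi_k}$ costs at most $\sum_k\|\pi_k(\cdot|\vq)-\pi^*(\cdot|\vq)\|_1\,\delta(\vq)\le 2T\,\delta(\vq)$, so $R_T(\vq)\le M_\vq\sqrt{T\log|\A|/2}+2T\,\delta(\vq)$.

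The hardest step is the policy-independent span bound. By \eqref{eq:Qfunc}, the span of $Q_\pi(\vq,\cdot)$ is at most $\overline c(\vq)-\underline c(\vq)$ plus the oscillation of $V_\pi$ over the one-step successors of $\vq$, so it suffices to bound the relative value function \emph{uniformly over $\pi\in\Pi$}. Writing $V_\pi$ via \eqref{eq:valfunc_rec} with a reference state $\vq^*\in B$, I would split a return excursion to $\vq^*$ into (i) a ``drift into $B$'' phase, whose accumulated cost is controlled by the drift \eqref{eq:drift} and contributes $\tfrac1\epsilon f^2(\vq)$ up to additive constants, and (ii) a ``regeneration inside $B$'' phase, whose length and accumulated cost are controlled \emph{uniformly in $\pi$} by the minorization \eqref{eq:assump3} (expected length of order $T_B/p_B$, second moment of order $T_B/p_B^2$); finiteness of $B$ and $f>0$ on $B^c$ keep these constants policy-free, producing the $g_1$ term. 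The bounded-increment property \eqref{eq:bdddiff} transfers $f^2$ from a one-step successor $\vqp$ to $\vq$ via $f^2(\vqp)\le f^2(\vq)+2Df(\vq)+D^2$, which is where the $\tfrac{4D}\epsilon f(\vq)$ and $\tfrac{2D^2}\epsilon$ pieces enter. Collecting terms yields, uniformly over $\pi\in\Pi$, a span bound for $Q_\pi(\vq,\cdot)$ of the form $\tfrac2\epsilon f^2(\vq)+\tfrac{4D}\epsilon f(\vq)+\overline c(\vq)+g_1$; since $\|\wQ_\pi(\vq,\cdot)-Q_\pi(\vq,\cdot)\|_\infty\le\delta(\vq)$, the span of $\wQ_{\pi_k}(\vq,\cdot)$ is at most $2\delta(\vq)$ larger, i.e.\ at most $M_\vq$.

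Finally I would assemble. Combining the above and taking expectation over the learning randomness, $\sum_k\E(J_{\pi_k}-J_*)\le\sqrt{\tfrac{T\log|\A|}{2}}\,\E_{\vq\sim d_{\pi^*}}[\E M_\vq]+2T\,\E_{\vq\sim d_{\pi^*}}[\E\,\delta(\vq)]$. The second term is at most $c''T$ with $c''=2\beta$ by \eqref{eq:eval_error}; for the first, taking $d_{\pi^*}$-expectations term by term in $\E M_\vq\le 2\kappa(\vq)+\tfrac2\epsilon f^2(\vq)+\tfrac{4D}\epsilon f(\vq)+\overline c(\vq)+g_1$ gives $2\beta$, $\beta_2=\tfrac2\epsilon\E_{d_{\pi^*}}[f^2]$, $\beta_1=\tfrac{4D}\epsilon\E_{d_{\pi^*}}[f]$, and --- using that the stationary expectation of the drift \eqref{eq:drift} under $\pi^*$ forces $\E_{d_{\pi^*}}[\overline c(\vq)]\le g/\epsilon$ (the $f^2$ terms cancel) --- the remaining $g/\epsilon$ and $g_1$, so that $c'=\sqrt{\log|\A|/2}\,(2\beta+\beta_1+\beta_2+\tfrac g\epsilon+g_1)$, which is \eqref{eq:main}. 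The main obstacle is the span bound: classical (Glynn--Meyn-type) drift bounds on $V_\pi$ are policy-dependent, so making the constants policy-free requires the regeneration decomposition above and, crucially, the uniform minorization \eqref{eq:assump3} to control the ``inside $B$'' excursions for all policies at once --- and one must do this while keeping every constant $d_{\pi^*}$-integrable so that the interchange of summation and expectation in the first step is valid.
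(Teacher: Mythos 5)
Your proposal is correct and follows essentially the same route as the paper: the performance difference lemma under $d_{\pi^*}$, the per-state Hedge regret bound calibrated by $\eta_\vq = \sqrt{8\log|\A|/T}/M_\vq$, a policy-independent span bound on $Q_\pi(\vq,\cdot)$ obtained by combining the Lyapunov drift (for the excursion outside $B$) with the uniform minorization of Assumption~\ref{asspn:assump3} (to bound the time spent in $B$ by $T_B/p_B^2$), and the final assembly using $\E_{d_{\pi^*}}[\overline{c}(\vq)]\le g/\epsilon$. Your added remark that the interchange of the countable state sum with the expectation requires domination by the $d_{\pi^*}$-integrable bound $M_\vq$ is a point the paper treats implicitly, and is a welcome refinement rather than a deviation.
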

\begin{proof}The proof is in Appendix~\ref{appendx:maintheorem}. An outline is provided in Section~\ref{section:proofoutline}.
\end{proof}

{\subsection{Discussion on Assumptions}}
\label{disc:assumptions}
In this section we discuss how the assumptions can be satisfied in the context of stochastic networks, a broad class of applications. We focus on three main categories of these applications: (i) large but finite state spaces, (ii) countable state spaces with abandonments, and (iii) scheduling in switches.

\subsubsection{Finite but large state spaces}

Consider MDPs with finite state and action spaces, where the state \(\vq \in \{0, 1, \ldots, S\}^K\) is a vector of length \(K\), with each element representing the number of jobs in the corresponding queue. Here, \(S\) denotes the buffer size, making it a finite-state problem. The instantaneous costs are frequently modeled as linear in \(\|\vq\|\), so both \(\overline{c}(\vq) = O(\|\vq\|)\) and \(\underline{c}(\vq) = O(\|\vq\|)\), given that the number of actions is finite. In these applications, choosing \(f(\vq) = \|\vq\|_1\) automatically satisfies \Cref{asspn:asump2}. Due to finiteness of the state space, choosing a sufficiently large $g$ ensures that \Cref{eq:drift} is trivially satisfied.

If the policy and transition kernels ensure a non-zero probability of no job arrivals and no departures across the policy class (which is typical in most queuing systems), it is possible to transition from any state to \(\vq^*\) (which corresponds to the zero state \(\mathbf{0}\), representing empty queues) and from \(\vq^*\) to any other state. This guarantees irreducibility as per \Cref{asspn:asump1} and satisfies \Cref{asspn:assump3}. Since we are working with finite Markov chains, irreducibility is sufficient to ensure the existence of a unique stationary distribution. 

In previous literature that utilized a state-independent fixed step size \( \eta \), the theoretical bound for \( \eta \) is:
\begin{equation}
\label{eq:fixedss}
    \eta \propto \min_{\substack{\pi\in\Pi\\ \vq\in\S, a,a'\in\A}}\frac{1}{|\widehat{Q}_{\pi}(\vq,a)-\widehat{Q}_{\pi}(\vq,a')|}
\end{equation}
It is practically not possible to estimate the value $\eta$ from \Cref{eq:fixedss}. Hence, a broad hypermeter tuning without any guideline was necessary. For instance, consider the case of perfect policy evaluation, that is $Q_\pi(\vq,a) = \widehat{Q}_\pi(\vq,a)$. Then \Cref{eq:fixedss} suggests,
\begin{equation}
    \eta \propto \min_{\substack{\pi\in\Pi\\ \vq\in\S, a,a'\in\A}}\frac{1}{|{Q}_{\pi}(\vq,a)-{Q}_{\pi}(\vq,a')|}
\end{equation}
Our theory suggests (and later experimentally verified) that $Q_\pi(\vq,a)=O\pbr{\|\vq\|^2}$ when $c(\vq)=O\pbr{\|\vq\|}$. Hence this leads to $\eta\propto \frac{1}{\|\vq_{\text{max}}\|^2}$. Hence as the size of the state space increases, the value of $\eta$ reduces, vastly increasing the iteration complexity of NPG as the size of the state space increases. 

In prior literature \cite{abbasi2019politex,murthy2023performance} the assumption over policy evaluation error is a high probability bound as below:
\begin{equation}
\label{eq:lerror_old}
    \sup_{\pi\in\Pi}\left\|Q_\pi - \widehat{Q}_\pi\right\|_{d_\pi} \leq \epsilon
\end{equation}
where $\epsilon>0$ is a constant. From this, a very loose upper bound on $\sup_{\pi\in\Pi}\left\|Q_\pi(\vq,\cdot)-\widehat{Q}_\pi(\vq,\cdot)\right\|_\infty$ is obtained by assuming a lower bound on $d_\pi(q)$ and in theory, the stepsize $\eta$ has to be chosen inversely proportional to this quantity. However, this stepsize can lead to very slow convergence as the size of the state space increases. Potentially, one can search for the best stepsize by treating it as a hyperparamter and tuning it experimentally. However, there are no easy guidelines for this hyperparameter tuning.

On the other hand, the assumption on policy evaluation in \Cref{thm:maintheorem} models the value function estimation error as
\begin{equation}
    \sup_{\pi\in\Pi}\left\|Q_\pi(\vq,\cdot)-\widehat{Q}_\pi(\vq,\cdot)\right\|_\infty \leq \delta(\vq).
\end{equation}
The value function  error consists of two parts: one a function approximation error and another a learning error associated with learning the parameters of the function approximation. First, let us consider the tabular case, i.e., one where the value function is directly estimated for each (state, action) pair without using function approximation. Then, our bounds on the value function indicate an upper bound on $Q$ which is quadratic in state $\|\vq\|$. This bound can thus be leveraged to ensure that $\delta(\vq)\approx\|\vq\|^2$. Moreover, since the approach to obtaining performance bounds of NPG in prior literature \cite{abbasi2019politex,murthy2023performance} does not explicitly characterize upper bounds on the solution to the Poisson's equation, the provable error bounds i.e., $\epsilon$ in \Cref{eq:lerror_old} is thus agnostic of the structure of the state action value function and consequently loose. 

When dealing with learning using linear function approximations, since \Cref{lem:bddQ} indicates $Q_\pi(\vq,a)\leq O\pbr{\|\vq\|^2}$ for all $a\in\A$ (as $f(\vq)=\|\vq\|_1$), choosing a feature space representation $\Phi$, where the largest element of $\phi(\vq,a)$ is quadratic in $\|\vq\|$ yields a learning error $\delta(\vq)\leq O\pbr{\|\vq\|^2}$. Since all moments of $\|\vq\|$ exist, this provides a guideline regarding the choise of a state dependent step size i.e., $\eta(\vq)\propto \frac{1}{\|\vq\|^2}$. The choice of $\eta$ in prior literature explicitly relied on the knowledge of $\sup_{\pi\in\Pi}\max_{\substack{\vq\in\S \\ a\in\A}}\widehat{Q}_\pi(\vq,a)$, hence necessitating a broad hyperparameter search without any prior knowledge to aid with this search.

In the context of really large spaces, a powerful tool employed to approximate $Q$ functions are large scale neural networks, which can be potentially utilized to learn in countable state spaces as well (as elaborated in \Cref{subsubsec:Evaluation}). Since neural networks of sufficient width can approximate continuous functions arbitrarily well, $\delta(\vq)\approx O\pbr{\|\vq\|^2}$
is a reasonable error bound as $Q_\pi(\vq,a)\leq O\pbr{\|\vq\|^2}$ for all $\pi\in\Pi, \vq\in\S, a\in\A.$

\subsubsection{Countable State Spaces with Abandonments}
Abandonments occur when the wait time for service of a job is too long. For instance, in two sided queuing systems, such as those encountered in ride hailing apps such as Uber/Lyft, a person might leave a queue if not serviced sufficiently quickly. Suppose that at each time instant, an individual abandons the queue independently of others with probability \(\nu\). In this case, \Cref{eq:drift} exhibits a strong negative drift as the queue length grows. This can be demonstrated by choosing the Lyapunov function \(f(\vq) := \|\vq\|^2\), similar to the finite state case. The reasoning is that as the queue length increases, the likelihood of abandonments rises accordingly. Thus, in the presence of abandonments and with bounded arrivals and departures within a single time slot, \Cref{asspn:asump2} is naturally satisfied. \Cref{asspn:asump1} and \Cref{asspn:assump3} are satisfied as long as there is a non-zero probability of no job arrivals and no service, in a fashion identical to as described in the finite state setting. 

\subsubsection{Scheduling in Switches}
Switch scheduling is encountered in a wide array of applications such as wireless networks, cloud computing, data centres, etc. Here the state space denotes a vector of job lengths corresponding to different queues. The action space in such a setting corresponds to different possible bipartite matchings from the input queues to the output queues. In such a scenario, the cost associated with a state is independent of the matching chosen and hence can be modeled as $c(\vq):=\|\vq\|_1$. Consequently the Lyapunov function chosen is also identical i.e., $f(\vq)=\|\vq\|_1$. As in most applications, the  number of jobs that can arrive and depart in a single time instant is uniformly bounded. In such systems, the drift \Cref{eq:drift} in \Cref{asspn:asump2} can be satisfied as described below.

In a large class of queueing systems, the MaxWeight policy is known to ensure stability, i.e., positive recurrence (see Chapter 4, \cite{srikant2013communication}). 
    Assumption~\ref{asspn:asump2} is inspired by the so-called MaxWeight policy, which is 
    known to   
    satisfy the drift equation below:
    \begin{equation}
    \label{eq:maxweightdrift}
        \E_{\pi_{\mathsf{MW}}}\sbr{\|\vq_{k+1}\|^2-\|\vq_k\|^2|\vq_k=\vq}\leq -\epsilon\|\vq\|_1 + d_1
    \end{equation}
    where the expectation is taken with respect to $\pi_{\mathsf{MW}}$ and $\epsilon,d_1$ are some positive constants independent of policy. 
    Assumption~\ref{asspn:asump2} is designed so that we explore a family of randomized policies that inherit stability from MaxWeight, while also enabling us to learn policies that outperform MaxWeight. 
    
    In particular, we consider policies obtained by using a combination of MaxWeight and arbitrary randomized acations by transforming the underlying MDP as follows. 
    Let the policies obtained from update Equation~\ref{eq:update} be referred to as $\pi_{\mathsf{NPG}}$. Modify the underlying MDP such that the probability transition kernel corresponds to a policy $\pi$ defined below:
    \begin{equation}
    \label{eq: modifiedMDP}
        \pi(a|\vq) = \begin{cases}
            \pi_{\mathsf{NPG}}(a|\vq) ,&   \text{w.p.} \quad \min\pbr{1,\frac{1}{\lambda \|\vq\|}}\\
            \pi_{\mathsf{MW}}(a|\vq),         &   \text{w.p.} \quad 1 - \min\pbr{1,\frac{1}{\lambda \|\vq\|}}
        \end{cases}
    \end{equation}
    where \(\lambda > 0\) is a fixed parameter with a very small positive value.
    
    As the queue length grows larger, the above transformed MDP enacts the Max-Weight policy with greater probability at higher queue lengths. The value of $\lambda$ decides the threshold at which Max-Weight policy starts influencing the transition dynamics. Once queue lengths exceed $\frac{1}{\lambda}$, this soft thresholding compromises some optimality to prioritize stability. This differs from the hard thresholding approach taken  in \cite{liu2019reinforcement}.
    
    We will now illustrate that this family of soft-thresholded policies satisfy Assumption~\ref{asspn:asump2}. First note that from the bounded arrivals and departures assumption in \Cref{eq:bdddiff}, it is easy to show that $\pi_{\mathsf{NPG}}$ satisfies
    \begin{equation}
        \E_{\pi_{\mathsf{NPG}}}\sbr{\|\vq_{k+1}\|^2-\|\vq_k\|^2|\vq_k=\vq}\leq d_2\|\vq\|_1 + d_3
    \end{equation}
    where the expectation is taken with respect to $\pi_{\mathsf{NPG}}$ and $d_2,d_3$ are some positive constants independent of policy. Thus the drift equation corresponding to policy $\pi$ in Equation~\ref{eq: modifiedMDP} is as follows:
    \begin{align}
        \E_{\pi}& \sbr{\|\vq_{k+1}\|^2-\|\vq_k\|^2|\vq_k=\vq}\leq \nonumber \\
        & \begin{cases}
            \frac{d_2}{\lambda} + d_3 ,&    \|\vq\|_1\leq\frac{1}{\lambda}\\
            \frac{1}{\lambda \|\vq\|_1}\pbr{d_2\|\vq\|_1 + d_3} + \pbr{1-\frac{1}{\lambda \|\vq\|_1}} \pbr{-\epsilon\|\vq\|_1 + d_1},         &    \|\vq\|_1>\frac{1}{\lambda}
        \end{cases}
        \label{eq:mixeddrift}
    \end{align}
    Combining the cases in Equations~\ref{eq:mixeddrift}, we obtain the following drift relation for policy $\pi$ for all $\vq\in\S$:
    \begin{equation}
    \label{eq: final_drift}
        \E_{\pi}\sbr{\|\vq_{k+1}\|^2-\|\vq_k\|^2|\vq_k=\vq}\leq -\epsilon\|\vq\|_1 + D
    \end{equation}
    where $D$ is a constant independent of policy $\pi$ but is a function of constants $d_1, d_2, d_3, \epsilon$ and $\lambda$. Note that the constant $\epsilon$ remains the same in both \eqref{eq:maxweightdrift} and \eqref{eq: final_drift}. This constitutes one such class of policies that satisfies the required the drift equation~\eqref{eq:drift} for our analysis.

{\subsubsection{Policy Evaluation in Stochastic Networks with Countable States}}
\label{subsubsec:Evaluation}

The theory for performing policy evaluation using learning is not well developed for countable state spaces. We present some speculative ideas in this regard, but this requires considerable further work which is beyond the scope of this paper. However, in practice, there has been experimental work for countable state spaces, which seems to indicate learning based control is possible \cite{dai2022queueing}.

It is a well-known fact that neural networks with at least one hidden layer of sufficient width and a non-linear activation function can approximate any continuous function on a compact domain arbitrarily well \cite{cybenko1989approximation,funahashi1989approximate,hornik1989multilayer}. A potential technique to evaluate value functions associated with infinite state spaces can be through neural network temporal difference learning. In order to do so, consider the following transformation to compactify the domain of the problem. Recall that the system comprises of $K$ queues that is, $\vq\in\N^K$. Let $q_i$ represent the number of jobs in the $i^{\text{th}}$ queue. Then define a vector $\mathbf{x}\in\sbr{0,1}^K$ such that the $i^{\text{th}}$ element is $x_i = \frac{1}{1+q_i}$. Given a policy $\pi$, consider a linear function approximation $\wQ_\pi(\vq,a)$ of the state-action value function $Q_\pi(\vq,a)$ as below:
\begin{equation}
    \frac{\wQ_\pi(\vq,a)}{\|\vq\|^2} = {\theta_\pi}^\top \phi\pbr{\mathbf{x(q)},a}
\end{equation}
where the feature vector $\phi$ is defined as below,
\begin{equation}
    \phi\pbr{\mathbf{x(q)},a} = \begin{bmatrix} 
                                \mathbb{I}_{w_1^\top \pbr{\mathbf{x(q)},a} \geq 0} \ \pbr{\mathbf{x(q)},a} \\ \vdots \\ \mathbb{I}_{w_m^\top \pbr{\mathbf{x(q)},a} \geq 0} \ \pbr{\mathbf{x(q)},a}
                                \end{bmatrix}.
\end{equation}
Here, $w_i\sim\mathcal{N}(0,I)$ and $I\in\R^{(K+1)\times(K+1)}$ is the identity matrix. This linearized model is well-studied approximation to a neural network and is called the Neural Tangent kernel (NTK) approximation; see \cite{ji2019polylogarithmic}, for example. We will not discuss the merits of the NTK approximation here since that is irrelevant to our analysis, but we only introduce the NTK approximation to discuss why we chose our model for function approximation. In the NTK approximation, $w_i\in\R^{K+1}$ is random initialization which chooses a random set of features. Each feature vector $\phi\pbr{\mathbf{x(q)},a}$ is of length ${m|\A|K}$, where $m$ represents the width of the hidden layer in the neural network. Finally, $\theta^*_\pi$ represents the optimal parameter vector, i.e., the parameter that best estimates $Q_\pi(\vq,a).$ 

The state action value function $Q_\pi(\vq,a)$ can be approximated arbitrarily well if $Q_\pi$ is a continuous function. This is indeed the case for some simple contexts such as the M/M/1 queue, where the value function is a quadratic function in queue length (\cite{meyn2008control}). More generally, Equation~\eqref{eq:Qbd} indicates that the 
$Q_\pi(\vq,a)$ can be upper bounded by a quadratic function.
Therefore, under the assumption that $Q_\pi$ is continuous, the learning error due to policy evaluation using the neural network can be characterized as follows:
\begin{align}
    \|Q_\pi(\vq,a) - \wQ_\pi(\vq,a)\|&= \|Q_\pi(\vq,a) - {\theta_\pi}^\top \phi\pbr{\mathbf{x(q)},a}{\|\vq\|^2}\|\\
    \label{eq:fapperror}
                                    &\leq\left\|Q_\pi(\vq,a) -{\theta^*_\pi}^\top \phi\pbr{\mathbf{x(q)},a}{\|\vq\|^2}\right\| \\&
    \label{eq:learningerror} 
                                    +\left\|{\theta^*_\pi}^\top \phi\pbr{\mathbf{x(q)},a}{\|\vq\|^2}- {\theta_\pi}^\top \phi\pbr{\mathbf{x(q)},a}{\|\vq\|^2}\right\|
\end{align}
The function approximation error is captured in Equation~\eqref{eq:fapperror} as follows:
\begin{equation}
    \left\|Q_\pi(\vq,a) -{\theta^*_\pi}^\top \phi\pbr{\mathbf{x(q)},a}{\|\vq\|^2}\right\| = \left\|\frac{Q_\pi(\vq,a)}{\|\vq\|^2} -{\theta^*_\pi}^\top \phi\pbr{\mathbf{x(q)},a}\right\|{\|\vq\|^2} \leq \delta_1(m){\|\vq\|^2}
\end{equation}
where $\delta(m)$ is a constant that is independent of the underlying policy but depends on the width of the hidden layer. In fact, it is shown in \cite{satpathi2021dynamics} that when approximating polynomials, as $m\to\infty$, $\delta_1(m)\to 0$. The temporal difference (TD) learning error is captured in Equation~\eqref{eq:learningerror} and is a function of the number of samples available and can be quantified as $\delta_2\|\vq\|^2$ with high probability. And thus, with high probability, the overall state dependent error can be quantified as follows:
\begin{equation}
\label{eq:pe_error}
    \|Q_\pi(\vq,a) - \wQ_\pi(\vq,a)\| \leq \delta \|\vq\|^2
\end{equation}
where $\delta=\delta_1(m)+\delta_2$.

\section{Proof outline and Key Insights}
\label{section:proofoutline}
The difference in average cost associated with a policy \(\pi\) and the optimal average cost is linked to the \(Q_\pi\) function through the performance difference lemma (\cite{cao1999single}) as below:
\begin{equation}
\label{eq:pdl}
    J_\pi -J^*  =  \E_{\vq\sim d_{\pi^*}}\sbr{Q_\pi\pbr{\vq,\pi(\vq)}-Q_\pi\pbr{\vq,\pi^*(\vq)}}.
\end{equation}
Hence, the regret in LHS of Equation~\eqref{eq:main} can be captured in terms of difference in the state action value function \(Q_\pi\). However, in practise it is not possible to determine \(Q_\pi\) exactly since the model might be unknown or the state space is infinite. Hence, we incorporate the estimates \(\wQ_\pi\) of the value function \(Q_\pi\). If the estimates satisfy Equation~\eqref{eq:eval_error}, then from Equation~\eqref{eq:pdl} we obtain the following regret formulation:
\begin{equation}
\label{eq:proofoutline}
    \sum_{k=1}^T \E\pbr{J_{\pi_k} - J^*} \leq 2T \E_{\vq\sim d^{\pi^*}}\kappa\pbr{\vq} + \underbrace{\E_{\vq\sim d_{\pi^*}}\sbr{\E\pbr{\sum_{k=1}^T\wQ_{\pi_k}\pbr{\vq,\pi^*(\vq)}-\wQ_{\pi_k}\pbr{\vq,\pi_k(\vq)}}}}_{(a)}
\end{equation}
The term linear in $T$, i.e., $2\E_{\vq\sim d^{\pi^*}}\kappa\pbr{\vq}$ is a consequence of function approximation and is generally unavoidable \cite{abbasi2019politex}.The primary task is to bound \((a)\) in Equation~\eqref{eq:proofoutline}. We approach this in four steps: (i) examining the link between NPG and prediction through expert advice as highlighted in prior literature, and identifying challenges specific to our countable state-space model and cost structure, (ii) deriving policy-independent bounds on the value functions, i.e., the solution to Poisson's Equation~\eqref{poissons}, (iii) accounting for policy evaluation errors and establishing policy-independent bounds on the estimates of the value function, and (iv) integrating all these steps to achieve the final result. We now proceed with the proof outline. 

\textbf{Step 1 (Connection to Weighted Averaging):} This step involves connecting learning within Markov Decision Processes (MDPs) to prediction through expert advice. This connection was initially identified in \cite{even2009online} for MDPs and later extended to the learning setting in \cite{abbasi2019politex}. We now discuss this connection in some detail and explain why we need our proof techniques to adapt this connection to the countable state-space setting. In the framework of prediction through expert advice, the agent selects an action \(a_t\) at time \(t\), and the environment responds with a corresponding loss \(l_t(a_t)\). Concurrently, an expert follows a predetermined strategy, which in our context can be simplified to a single action \(a^*\) taken at each time step, also experiencing a loss of \(l_t(a^*)\). The agent's objective is to minimize the overall loss by considering all it's past observations when choosing an action. If the expert opts for a fixed strategy \(\pi^*\) over the available actions, the following holds true.
\begin{theorem}
\label{th:weightavg}
    (\textit{Section 4.2, Corollary 4.2, \cite{cesa2006prediction}}.) Consider the exponentially weighted average forecaster problem. Let the set of actions possible at each time step and each instance be denoted by $\A:=\cbr{1,\ldots,n}$. For a fixed instance $s$, let $l_t(s,i)$ be the loss associated with action $i\in\A$ at time $t$ such that for any pair of actions $i,i'\in\A$, 
    \begin{equation}
    \label{eq:boundedloss}
        \left|l_t(s,i)-l_t(s,i')\right|\leq M(s)
    \end{equation}
    Consider the action strategy below:
    \begin{equation}
        \pi_t(i|s) = \frac{\pi_{t-1}(i|s)\exp{\pbr{-\eta_s l_{t-1}(s,i)}}}{\sum_{k=1}^n \pi_{t-1}(k|s)\exp{\pbr{-\eta_s l_{t-1}(s,k)}}}
    \end{equation}
    Then, for any fixed policy $\pi^*$, setting $\eta_s = \sqrt{\frac{8\log n}{T}}\frac{1}{M(s)}$ yields the following overall regret corresponding to instance $s$.
    \begin{equation}
        \sum_{k=1}^T \pbr{l_k\pbr{s,\pi_k(s)} - l_k\pbr{s,\pi^*(s)}} \leq M(s)\sqrt{\frac{T\log n}{2}}
    \end{equation}
\end{theorem}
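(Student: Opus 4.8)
The plan is the standard potential-function argument for the exponentially weighted average forecaster, specialized to a single instance $s$. Since both the update and the claimed bound involve only quantities indexed by $s$, I fix $s$ throughout and drop it from the notation, writing $l_t(i)$, $M$, $\eta$ for $l_t(s,i)$, $M(s)$, $\eta_s$, and reading $l_k(s,\pi_k(s))$ as the expected loss $\sum_i \pi_k(i|s)\,l_k(s,i)$ (and similarly for $\pi^*$); I also take the forecaster to be initialized uniformly, which is what makes the leading term $(\log n)/\eta$ rather than a general $\mathrm{KL}$ term. A preliminary reduction: the update $\pi_t$ depends on the losses only through the \emph{differences} $l_t(i)-l_t(i')$ (the normalizing denominator cancels any common additive term), and the regret $\sum_k (l_k(\pi_k)-l_k(\pi^*))$ is also invariant under shifting all of $l_k(\cdot)$ by a common constant, so I may replace $l_t(i)$ by $l_t(i)-l_t(1)$. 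By hypothesis \eqref{eq:boundedloss} the shifted losses then have range at most $M$ at every time step; this is the only place the bounded-\emph{difference} assumption is used, and it is exactly what lets the argument go through even though the losses themselves need not be bounded.

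Define the cumulative losses $L_t(i)=\sum_{k=1}^{t}l_k(i)$ with $L_0\equiv 0$, and the potential $W_t=\sum_{i=1}^n e^{-\eta L_{t-1}(i)}$, so that $W_1=n$ and $\pi_t(i)=e^{-\eta L_{t-1}(i)}/W_t$. The argument has two halves. For the lower bound, pick $j\in\argmin_i L_T(i)$; then $\log W_{T+1}\ge \log e^{-\eta L_T(j)}=-\eta\min_i L_T(i)$, and since $\sum_{k=1}^T l_k(s,\pi^*(s))=\sum_i\pi^*(i|s)L_T(i)\ge \min_i L_T(i)$, this gives $\log W_{T+1}\ge -\eta\sum_{k=1}^T l_k(s,\pi^*(s))$ --- in particular, competing against a mixed $\pi^*$ is no harder than competing against the best fixed action. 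For the per-step upper bound, telescoping gives $\log(W_{T+1}/W_1)=\sum_{t=1}^T\log(W_{t+1}/W_t)$ with $\log(W_{t+1}/W_t)=\log\sum_i\pi_t(i)e^{-\eta l_t(i)}=\log\E_{i\sim\pi_t}[e^{-\eta l_t(i)}]$. Since under $i\sim\pi_t$ the variable $l_t(i)$ lies in an interval of length $\le M$, Hoeffding's lemma ($\log\E[e^{sX}]\le s\E X+s^2(b-a)^2/8$ for $X\in[a,b]$) yields $\log\E_{i\sim\pi_t}[e^{-\eta l_t(i)}]\le -\eta\,l_t(s,\pi_t(s))+\eta^2M^2/8$.

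Summing over $t$ and combining with $\log W_1=\log n$ and the lower bound gives $-\eta\sum_k l_k(s,\pi^*(s))-\log n\le -\eta\sum_t l_t(s,\pi_t(s))+T\eta^2M^2/8$, i.e. $\sum_{k=1}^T\big(l_k(s,\pi_k(s))-l_k(s,\pi^*(s))\big)\le \frac{\log n}{\eta}+\frac{T\eta M^2}{8}$. The right-hand side is minimized over $\eta>0$ at $\eta=\frac1M\sqrt{8\log n/T}$, which is precisely $\eta_s$, and its value there is $2\sqrt{(\log n)(TM^2)/8}=M\sqrt{T\log n/2}=M(s)\sqrt{T\log n/2}$, the claimed regret.

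The only step that needs care is the per-step bound: one must perform the centering reduction so that $l_t(\cdot)$ has bounded range (this is where \eqref{eq:boundedloss}, rather than an absolute bound on the losses, enters), and then apply Hoeffding's lemma to the $\pi_t$-average. Everything else is bookkeeping; indeed this is Corollary 4.2 of \cite{cesa2006prediction}, with the instance-dependent range $M(s)$ carried through the estimates unchanged.
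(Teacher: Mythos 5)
Your proof is correct and is precisely the standard potential-function/Hoeffding argument from Section 2.1 and Corollary 4.2 of Cesa-Bianchi and Lugosi, which is all the paper itself offers (it defers the proof of this step entirely to that reference). The one place the stated hypothesis differs from the textbook version --- bounded loss \emph{differences} rather than bounded losses --- is handled correctly by your translation-invariance reduction, which is exactly how the paper intends the result to be read in its unbounded-cost setting.
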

The NPG algorithm can be interpreted as applying the weighted averaging algorithm to each state \(\vq\) in the state space, with the goal of learning the optimal policy for each state. In this context, the loss function associated with an action \(a\) in state \(\vq\) at time \(k\) is the estimate \(\widehat{Q}_{\pi_k}(\vq, a)\) of the state-action value function, where the policy in use at time \(k\) is \(\pi_k\). However, as indicated by Equation~\eqref{eq:boundedloss}, the loss function—\(\widehat{Q}_{\pi_k}(\vq, a)\)—must be bounded for any given state \(\vq\). In finite-dimensional MDPs, a state-independent uniform bound on the state-action value function is typically assumed \cite{abbasi2019politex}. This is due to the fact that the step-size $\eta$ is assumed to be independent of $s.$ Note that, compared to \cite{even2009online,abbasi2019politex}, we have made a small, but critical, change to the best-experts algorithm by allowing the step-size $\eta$ to be a function of $s.$ When the state-space is countable, the state-action value function \(Q_\pi\) cannot be uniformly bounded and hence, a constant step-size cannot be assumed. With the introduction of a state-dependent step-size, we can choose a different step-size for each state using bounds on the solution to Poisson's equation, i.e., \(Q_\pi(s,a)\), which depends on the state, but is uniform over all policies. Obtaining such bounds is one of the key contributions of the paper.

\textbf{Step 2 (Value Function Bounds):} To establish bounds on Poisson's Equation~\ref{eq:Qfunc}, we initially rely on Assumptions~\ref{asspn:asump1} and \ref{asspn:asump2}. In dealing with countable state space MDPs, along with irreducibility, we require the Markov chain to be positive recurrent for a unique stationary distribution to exist. The drift equation~\ref{eq:drift} along with the rest of Assumption~\ref{asspn:asump2} ensures the positive recurrence of the underlying Markov chain. Since \(Q_\pi\) is related to the state value function \(V_\pi\) (see Equation~\ref{eq:Qfunc}), we initially constrain \(V_\pi\) using Assumptions~\ref{asspn:asump1} and \ref{asspn:asump2}. This leads to an upper bound on \(V_\pi(\vq)\) for all \(\vq \in\S \), 
\begin{equation}
        V_\pi(\vq) \leq \frac{2}{\epsilon}f^2(\vq) + \underbrace{O{\pbr{\E_\pi\sbr{ \sum_{k=0}^{\tau^\pi_{\vq^*}}\mathbb{I}\pbr{\vq_k\in B}\Big|\vq_0=\vq}}}}_{(b)},
\end{equation}
where \(B\) is defined in Equation~\ref{eq: set_B}.
Recall Equation~\eqref{eq:boundedloss} in the context of weighted expert averaging. The constraint on the loss function's bound $\pbr{M(s)}$ must be independent of time. When applied to the NPG framework, this implies the necessity of a policy-independent upper bound on the state-action function \(Q_\pi\), which, in turn, necessitates a policy-independent bound on the state value function \(V_\pi\). For \((b)\) to be well-defined, the drift alone is insufficient, as indicated in previous studies \cite{glynn2024solution,glynn1996liapounov}. Addressing this is the second challenge in our analysis, which we navigate by introducing a mild structural Assumption~\ref{asspn:assump3} commonly satisfied in stochastic networks. 

These structural assumptions yield a uniform upper bound on the hitting time of state \(\vq^*\), defined in \Cref{def:qstar}, when starting from any point within \(B\). This uniform upper bound on hitting time aids in bounding the state value function \(V_\pi\) from below.  The drift inequality \eqref{eq:drift} along with a bound on hitting time assists in bounding the value function \(V_\pi\) from above. As a consequence, we obtain the following, 
\begin{equation}
\label{eq:Qbd}
    \left|Q_\pi(\vq,a)-Q_\pi(\vq,a')\right|\leq O(f^2(\vq)) \qquad \forall \pi\in\Pi, \forall a,a'\in\A \ \text{and} \ \forall \vq\in\S
\end{equation}
As a result, we establish policy-independent bounds on the value function \(Q_\pi\). While the drift assumption~\ref{asspn:asump2} played a crucial role in deriving policy-dependent bounds on the value function \(V_\pi\), for the purpose of NPG, we need these bounds to be independent of the policy. The structural assumption~\ref{asspn:assump3} eliminates this policy dependence. Consequently, from Equation~\ref{eq:Qfunc}, this translates into policy-independent bounds on \(Q_\pi\).

\textbf{Step 3 (Handling Estimation Errors):} Since our loss function in the context of Theorem~\ref{th:weightavg} is \(\wQ_\pi\), we need uniform bounds on \(\wQ_\pi\). We leverage the bounds on $Q_\pi$ obtained in Equation~\ref{eq:Qbd} and in conjunction with the evaluation error as modeled in \Cref{thm:maintheorem}, we obtain the following:
\begin{equation}
\label{eq:estimate_error}
    \left|\wQ_\pi(\vq, a) - \wQ_\pi(\vq, a')\right| \leq O(f^2(\vq))+O(\delta(\vq)) \qquad \forall \pi\in\Pi, \forall a,a'\in\A \ \text{and} \ \forall \vq\in\S
\end{equation}
Adapting Equation~\ref{eq:boundedloss} to the context of context of infinite state NPG, implies that $M_\vq = O(f^2(\vq))+O(\delta(\vq))$.



\textbf{Step 4 (Piecing it all together):} 
The upper bound $M_\vq$ on $\wQ_\pi$ in Step 3 is utilized to determine the state dependent step size as $\eta_\vq = \sqrt{\frac{8\log|\A|}{T}}\frac{1}{M_\vq}$. With bounds over $\wQ$ quantified in Equation~\ref{eq:estimate_error}, $(a)$ of Equation~\eqref{eq:proofoutline} is upper bounded by leveraging the connection to the prediction through expert advice Theorem~\ref{th:weightavg}. This yields the final result. 

The detailed proof of all steps and the main theorem can be found in Appendix.

\section{Simulations}
In this section, we empirically evaluate the performance of the algorithmic change proposed in the convergence of natural policy gradient. We consider tabular policies and finite state spaces. Motivated by autoscaling in cloud computing, we consider the following two settings. 

\subsection{Single Queue System}
\subsubsection{Setting}
\label{sec:single_queue}
We consider a single queue system of finite buffer size $B$. Jobs arrive as a Poisson process with rate $\Lambda = 0.45$. There are two service rates $\mu_1=0.5$ and $\mu_2=0.8$ available, where time taken to service a job under $\mu_i$ is distributed as $\text{Exp}(\mu_i)$.  The state space of this system corresponds to the number of jobs $q$ in the buffer, waiting to be serviced. The action $a$ at each state is the choice of the service rate. Hence $|\S|=B+1$ and $\A=2$. The policy is a probability vector over these two actions corresponding to each state. The discrete time probability transition matrix is the one obtained through uniformization of the CTMC corresponding to this system. The instantaneous cost at state $q$ under action $a=\mu_i$ is $c(q,a)=q+c_i$, where $c_1=1$ and $c_2=10$. Utilizing a higher service rate incurs a higher cost but ensures faster job completion, thereby reducing the overall queue length. 

\subsubsection{Policy Evaluation}
We use the $\text{TD}(\lambda)$ algorithm to evaluate the state-action value function \( Q_\pi \) for each policy. For further details on the algorithm, please refer to \cite{zhang2021finite}. First, we generate a trajectory of length \( n \) according to the transition kernel described earlier. The average cost is estimated by averaging the instantaneous costs obtained from the trajectory, and this estimate is then used to evaluate the state-action value function \( Q \). In these simulations, the learning rate is set to \( \beta = 0.1 \) and \( \lambda = 0.95 \).

\subsubsection{Policy Improvement}
The policy improvement step is as in \Cref{eq:update}. The initial policy is chosen to be uniform across all actions. 
Our theory on bounding the solution to the Poisson's Equation suggests that $Q(q,a)$ is of the order of $\frac{1}{q^2}$, with constants that may depend on the problem parameters. Therefore, to test the robustness of our algorithm, we choose $\eta_q = \frac{k}{q^2}$, independent of the problem parameters. To the best of our knowledge, there are no guidelines given for how to choose a fixed step size $\eta$ in prior literature. But based on our theory, since $Q_{\text{max}}$ is of the order $B^2$, where $B$ is the buffer size, we chose $\eta=\frac{1}{B^2}$ for fixed step size NPG. Note that previously there was no guideline to even choose a fixed $\eta$ in prior work, but our bounds on the solution to the Poisson's equation can be used to choose a state-independent $\eta$ as analyzed in prior work. 

\subsubsection{Observations}

\begin{figure}[t]
    \centering
    \subfigure[Single queue with buffer size $B$=20]{
    \includegraphics[width=0.46\linewidth]{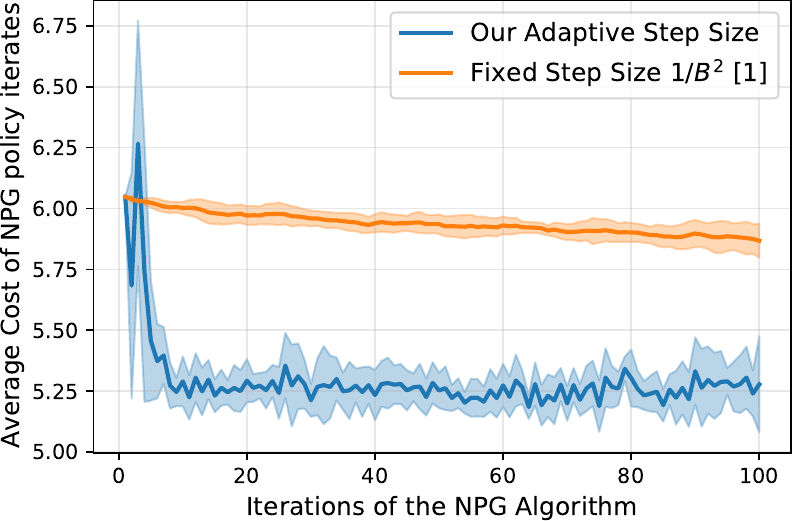}
    \label{fig: b20}
    }
    \hfill
    \subfigure[Single queue with buffer size $B$=50]{
    \includegraphics[width=0.46\linewidth]{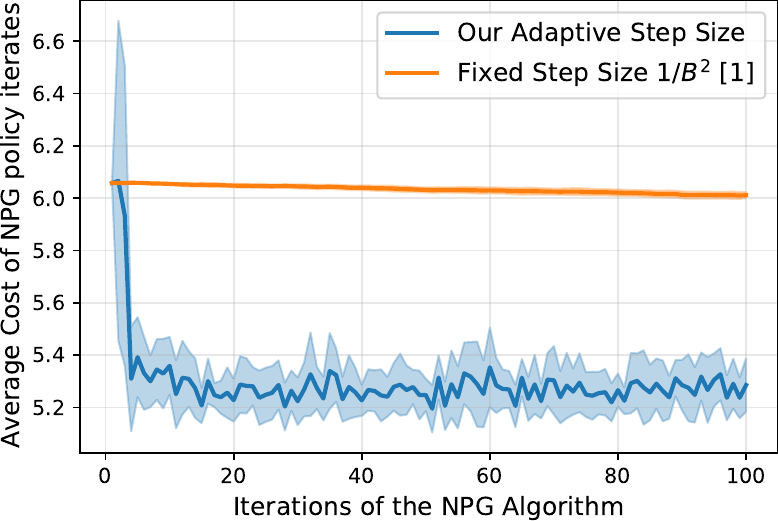}
    \label{fig: b50}
    }
    \vfill
    \subfigure[Single queue with buffer size $B$=75]{
    \includegraphics[width=0.46\linewidth]{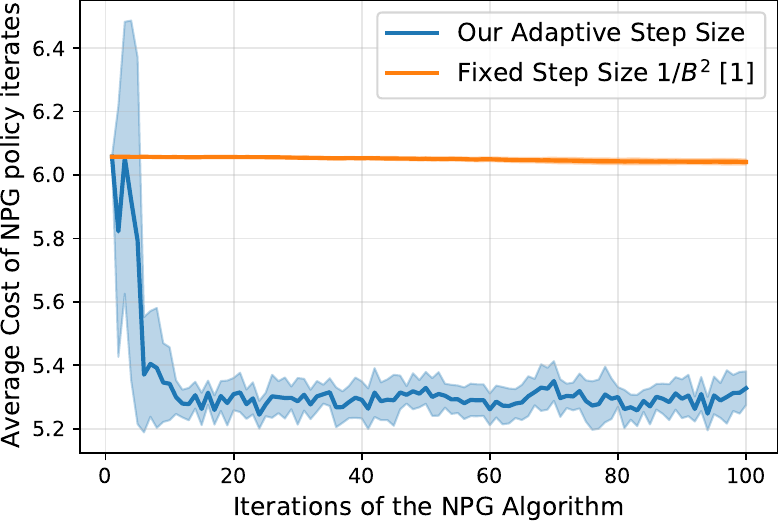}
    \label{fig: b75}
    }\hfill
    \subfigure[Single queue with buffer size $B$=100]{
    \includegraphics[width=0.46\linewidth]{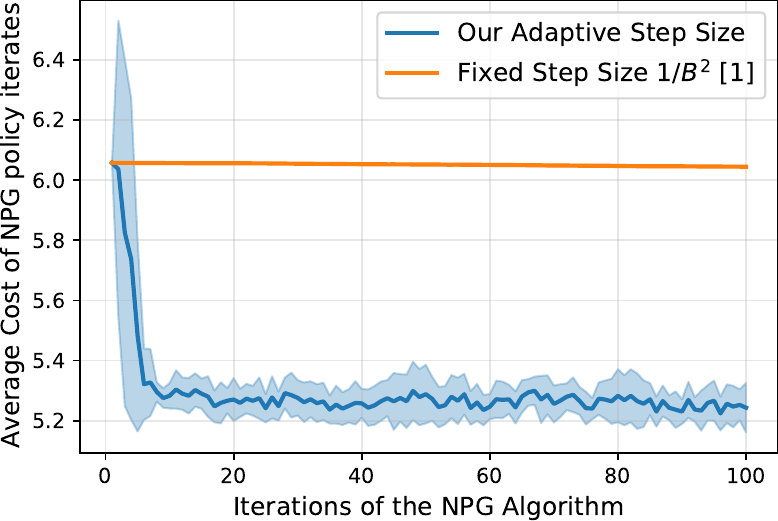}
    \label{fig: b100}
    }
    \caption{NPG in a single queue system}
    \label{fig:single_queue}
\end{figure}

The simulations for this setting are depicted in \Cref{fig:single_queue}. The blue lines represent the performance of NPG with an adaptive step size, while the orange lines indicate the performance with a fixed step size. The y-axis represents the average cost of the policies gnerated through the two NPG algorithms. The x-axis represents the iteration number. \Cref{fig: b20} corresponds to a queueing system with a single queue with a maximum buffer size of 20 (jobs that arrive after the buffer is full are dropped). The length of the trajectory for policy evaluation i.e., $n=3000.$ The performance is averaged over 15 runs of both algorithms. \Cref{fig: b50} corresponds to a buffer capacity of 50 jobs, with $n=5000$, averaged over 15 runs. The step size is set as $\eta_\vq = \frac{1}{q^2}$ for both these instances. \Cref{fig: b75} corresponds to a buffer capacity of 75 jobs, with $n=8000$, averaged over 10 runs. Similarly, \Cref{fig: b75} corresponds to a buffer capacity of 100 jobs, with $n=10000$, averaged over 15 runs. For the latter two cases $\eta_q=\frac{0.5}{q^2}$.

\subsection{Two Queue System}
\begin{figure}[t]
    \centering
    \subfigure[Two queue with buffer size $B$=5]{
    \includegraphics[width=0.46\linewidth]{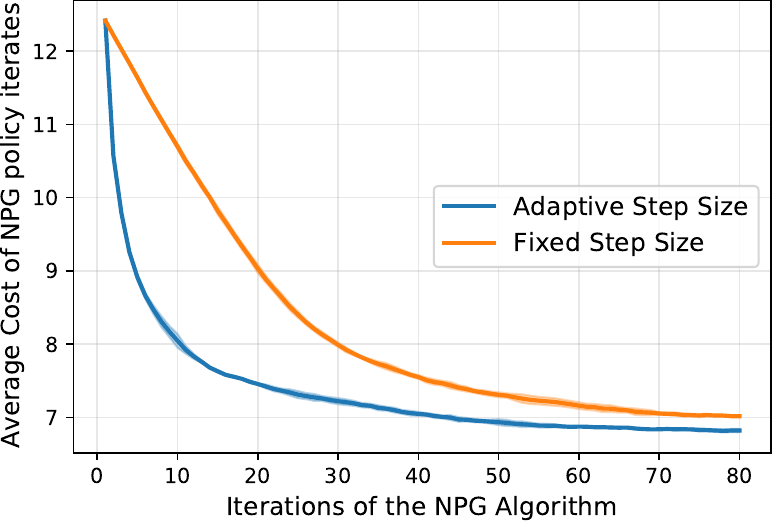}
    \label{fig:JSQ5}
    }
    \hfill
    \subfigure[Two queue with buffer size $B$=10]{
    \includegraphics[width=0.46\linewidth]{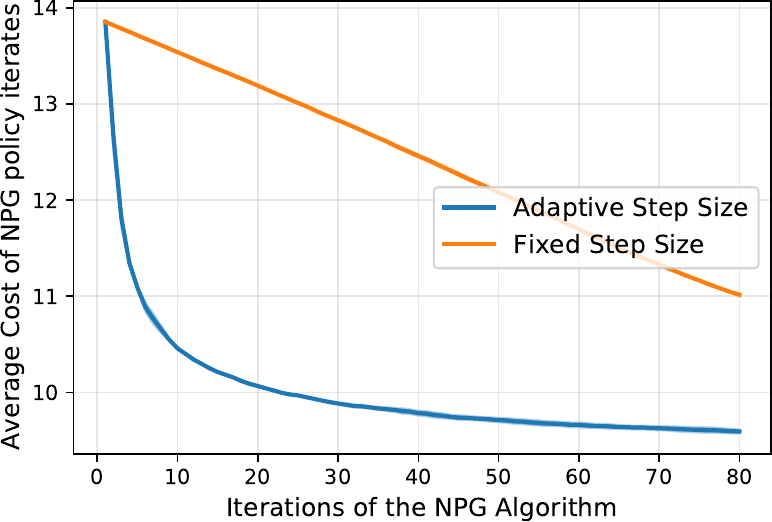}
    \label{fig:JSQ10}
    }
    \vfill
    \subfigure[Two queue with buffer size $B$=15]{
    \includegraphics[width=0.46\linewidth]{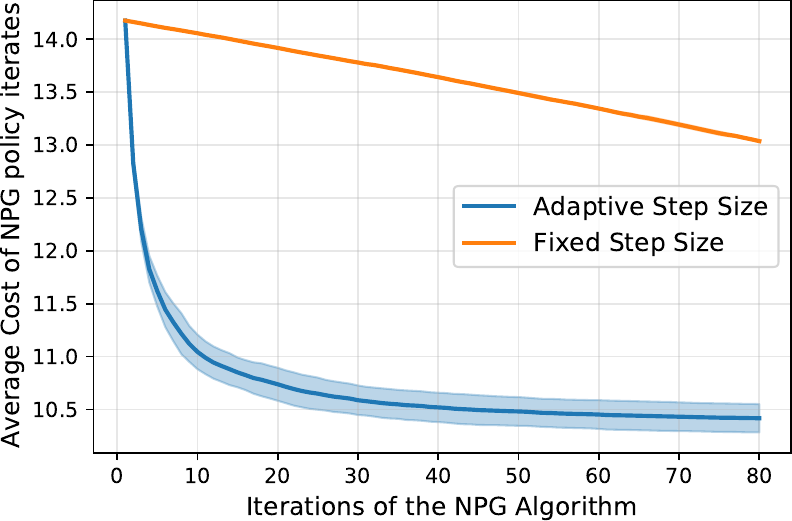}
    \label{fig:JSQ15}
    }\hfill
    \subfigure[Two queue with buffer size $B$=25]{
    \includegraphics[width=0.46\linewidth]{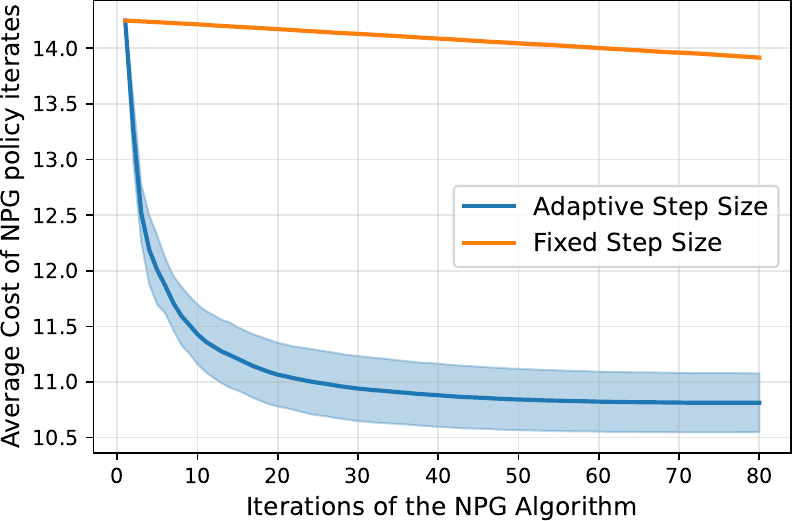}
    \label{fig:JSQ25}
    }
    \caption{NPG in a two queue system}
    \label{fig:double_queue}
\end{figure}

\subsubsection{Setting}
We consider a system with two queues each with buffer size $B$. Jobs arrive as a Poisson process at rate $\Lambda=0.45$ and are routed according to the JSQ (join the shortest queue) policy. Each queue has two service rate options $\mu_1=0.25$ and $\mu_2=0.3$ with $c_i$ as in Setting 1 in \Cref{sec:single_queue}. The state of the system is now a vector $\vq=(q_1,q_2)$ representing the number of jobs in both queues. The action is the choice of service rates for both queues. The cost when employing $a=(\mu_i,\mu_j)$ in state $\vq$ is $c(\vq,a) = q_1 + q_2 +c_i + c_j$. Higher service rate incurs a higher cost but ensures faster job completion. 

\subsubsection{Policy Evaluation}
We use $TD(\lambda)$ algorithm for average cost MDPs as in the previous setting. We first generate a trajectory of length $n$, estimate the average cost and use this estimate to learn the $Q$ function. We set the learning rate $\beta=0.1$ and $\lambda=0.1$.

\subsubsection{Policy Improvement}
We compare the NPG algorithm with two different learning rates $\eta$ namely the adaptive stepsize and the fixed step size. The policy improvement is as in \Cref{eq:update} with $\eta_\vq=\frac{k}{(q_1+q_2)^2}$, which is chosen based on our theory and the fixed step size is set as $\eta=\frac{1}{Q_{\text{max}}}$. Since $Q_{\text{max}}$ is of the order $4B^2$ where $B$ is the buffer capacity for both queues, the fixed step size is thus chosen to be $\frac{1}{4B^2}$.

\subsubsection{Observations}

\Cref{fig:JSQ5}, \Cref{fig:JSQ10}, \Cref{fig:JSQ15} and \Cref{fig:JSQ25} corresponds to a buffer capacity of $(5,5), (10,10), (15,15)$ and  $(25,25)$ jobs respectively. The length of the trajectory for policy evaluation for all four settings is $n=1000$. The step size for the first two cases is $\eta_\vq=\frac{1}{(q_1+q_2)^2}$ whereas for the latter two it is $\eta_\vq=\frac{0.5}{(q_1+q_2)^2}$. The performance is averaged over 3 runs for the first three cases and over 5 runs for the last case. 

\subsection{Noiseless Setting}
We also examine the case with no learning error, i.e., exact evaluation, to determine convergence rates for both step sizes in the previously described settings. Due to the NPG policy improvement update, the sequence of average costs is monotonic. In the single-queue scenario, where the optimal average cost is approximately 4.89 across all buffer sizes, we plot the number of iterations needed to for the cost to fall below 4.96. In \Cref{fig:SQ_Exact}, the y-axis shows the number of iterations required to reach this cost threshold. In the case of two queues, with a buffer size of 5, the optimal average cost is approximately 6, increasing to 10.17 when the buffer size grows to 25. In the plots, we compare the performance of our algorithm to the fixed step size algorithm by comparing the number of iterations needed for the average cost to fall below a threshold: for a buffer size of 5, the threshold we choose was 6; for a buffer size of 10, the threshold was 9.2; and for buffer sizes of 15 and 25, the threshold was set to be 10.26. 
\begin{figure}[t]
    \centering
    \subfigure[Single queue with buffer size $B$=5]{
    \includegraphics[width=0.46\linewidth]{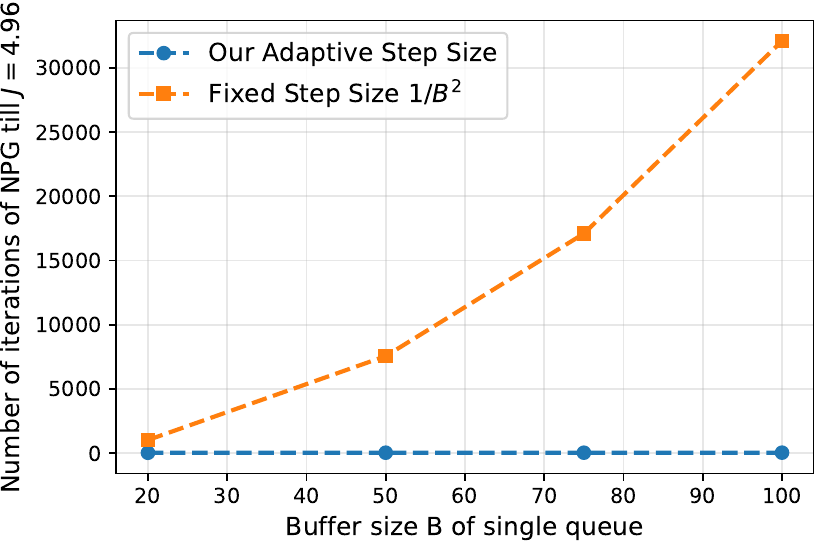}
    \label{fig:SQ_Exact}
    }
    \hfill
    \subfigure[Double queue with buffer size $B$=10]{
    \includegraphics[width=0.46\linewidth]{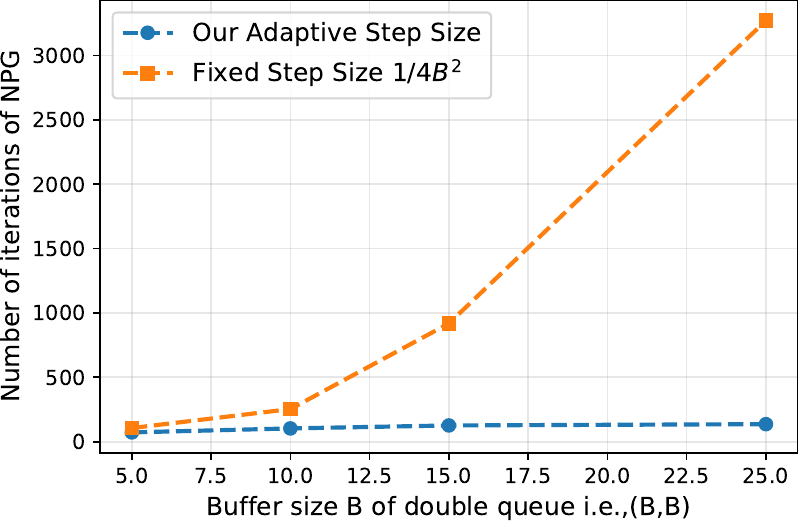}
    \label{fig:JSQ_Exact}
    }
    \caption{NPG with perfect policy evaluation}
\end{figure}
In the case of single queues, using our adaptive step size \(\eta_q = \frac{1}{q}\), the cutoff criterion is reached in roughly 35 iterations, regardless of buffer size. In contrast, with a fixed step size, the number of iterations required to meet the cutoff criterion increases by orders of magnitude as the buffer size grows, as shown in \Cref{fig:SQ_Exact}. 
Similarly, for two queues, our adaptive step size \(\eta_\vq = \frac{1}{q_1+q_2}\) achieves the cutoff criterion in approximately 100 iterations, independent of the state space size. However, when using a fixed step size, the number of iterations required to reach the cutoff criterion follows the pattern illustrated in \Cref{fig:JSQ_Exact}.

\subsection{Key Takeaways}
\begin{itemize}
    \item The NPG algorithm with the adaptive learning rate seems to converge to the near optimal policy in a state-space cardinality independent manner. The magnitude of the slope of the orange line in \Cref{fig:single_queue} and \Cref{fig:double_queue} reduces as the buffer size increases indicating larger number of NPG iterations as the state space grows where as an adaptive step size doesn't face this issue, thus confirming the observations from our theoretical analysis.
    \item The number of iterations to converge to near optimal policy is more or less similar in the context of perfect information and with learning. This suggests that the algorithm is robust to greater errors in the value function estimates of states not visited frequently enough. Hence, the proposed learning rate accommodates realistic learning errors.
    \item Previous literature lacked a heuristic for selecting an effective step size for the NPG algorithm. In contrast, our analysis, based on bounds from Poisson's Equation, offers a state-dependent rule of thumb that significantly improves upon prior step size choices and requires minimal knowledge of the specific MDP instance.
\end{itemize}

\bibliographystyle{ieeetr}
\bibliography{neurips.bib}





\newpage
\appendix

\section{Appendix / supplemental material}

The proof of Step 1 can be found in Chapter 4 of \cite{cesa2006prediction}.

\subsection{Proof of Step 2}

\label{appdx:avgcost}
The following lemmas are a consequence of Assumptions~\ref{asspn:asump1} and \ref{asspn:asump2}. 

\begin{lemma}
    \label{lem: positive_recurrence}
    Let $\P_\pi$ be an irreducible transition matrix on the countable state space $\S.$ Suppose that \eqref{eq:drift} is satisfied. Then the corresponding homogenous Markov Chain is positive recurrent. Consequently, the stationary distribution $d_\pi$ corresponding to $\P_\pi$ exists and is unique \cite{bremaud2013markov}.
\end{lemma}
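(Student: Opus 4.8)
The plan is to establish positive recurrence by invoking a standard Foster--Lyapunov drift criterion, using the hypotheses of \Cref{asspn:asump2} to verify its conditions. The natural test function to use is $g(\vq) := f^2(\vq)$, since the drift inequality \eqref{eq:drift} is already stated in terms of $f^2$. First I would rewrite \eqref{eq:drift} as
\begin{equation*}
    \E_\pi\sbr{f^2(\vq_{k+1}) - f^2(\vq_k) \mid \vq_k = \vq} \leq -\epsilon\,\overline{c}(\vq) + g
    = -\pbr{\epsilon\,\overline{c}(\vq) - g},
\end{equation*}
and then split on membership in the set $B = \cbr{\vq : \underline{c}(\vq) \leq 2g/\epsilon}$. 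For $\vq \notin B$ we have $\underline{c}(\vq) > 2g/\epsilon$, hence $\overline{c}(\vq) \geq \underline{c}(\vq) > 2g/\epsilon$, so $\epsilon\,\overline{c}(\vq) - g > 2g - g = g > 0$; thus outside the finite set $B$ the drift is bounded above by the strictly negative constant $-g$ (or, if one prefers a margin, by $-g < 0$, and one can even extract a drift of $-\epsilon\,\underline{c}(\vq)/2$ type if a state-dependent rate is wanted). On $B$, which is finite, the drift is at most $g$ (using $\overline{c} \geq 0$), i.e. uniformly bounded. This is exactly the Foster--Lyapunov / Meyn--Tweedie criterion for positive recurrence: a non-negative test function whose one-step drift is bounded above by a negative constant off a finite set and is bounded on that finite set.

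The key steps, in order, are: (1) fix an arbitrary policy $\pi \in \Pi$ and note $\P_\pi$ is irreducible by the hypothesis of the lemma; (2) set $g(\vq) = f^2(\vq) \geq 0$ and verify, via the case split above, that $\E_\pi[g(\vq_{k+1}) \mid \vq_k = \vq] \leq g(\vq) - g\cdot\1\{\vq \in B^c\} + (g + \text{const})\cdot\1\{\vq \in B\}$, so that the drift is $\leq -\varepsilon_0$ on $B^c$ for some $\varepsilon_0 > 0$ and $\leq b < \infty$ on the finite set $B$; (3) invoke the Foster--Lyapunov theorem (e.g. Theorem in \cite{bremaud2013markov}, or Meyn--Tweedie) to conclude the chain $\P_\pi$ is positive recurrent; (4) conclude from positive recurrence plus irreducibility of a countable-state chain that a unique stationary distribution $d_\pi$ exists. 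One subtlety worth a sentence is that the drift condition as stated is a one-step condition uniform in the starting state, which is the cleanest form of Foster's criterion, so no additional uniform-integrability or minorization argument is needed for mere positive recurrence; conditions \eqref{eq:bdddiff} and the part of \Cref{asspn:asump2} about $f(\vq) > 0$ on $B^c$ are not needed here (they are used later for the value-function bounds), so I would remark that only \eqref{eq:drift} and finiteness of $B$ are invoked.

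The main obstacle is essentially bookkeeping rather than depth: one must be careful that $B$ as defined via $\underline{c}$ is indeed the right ``small set'' for a drift stated in terms of $\overline{c}$. The point is that $\overline{c}(\vq) \geq \underline{c}(\vq)$ always, so $\vq \in B^c \Rightarrow \underline{c}(\vq) > 2g/\epsilon \Rightarrow \overline{c}(\vq) > 2g/\epsilon \Rightarrow -\epsilon\,\overline{c}(\vq) + g < -g$; this inequality direction is what makes the negative drift work on $B^c$, and it is the one place where a sign error could creep in. A secondary point is to state the Foster--Lyapunov criterion in a form that accepts a non-negative (rather than finite-everywhere-with-finite-level-sets) test function and a drift that is merely bounded above on the exceptional set; the standard references handle this, so I would just cite \cite{bremaud2013markov} for the precise statement and for the fact that an irreducible positive recurrent countable chain has a unique stationary distribution.
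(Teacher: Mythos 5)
Your proof is correct and is exactly the argument the paper intends: the paper states this lemma with only a citation to Foster's drift criterion in \cite{bremaud2013markov}, and your verification---take $f^2$ as the Lyapunov function, use $\overline{c}(\vq)\geq\underline{c}(\vq)>2g/\epsilon$ off the finite set $B$ to get a uniform negative drift $<-g$, and note the drift is bounded on $B$---is the standard instantiation of that criterion. Two cosmetic points: avoid naming the test function $g(\vq)$ since $g$ is already the drift constant, and note that the argument implicitly uses $g>0$ (which is needed for the set $B$ and the bound $J_\pi\leq g/\epsilon$ to be meaningful anyway).
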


\begin{lemma}
\label{lem:hajek}
\textcolor{black}{
    Suppose Assumptions~\ref{asspn:asump1} and \ref{asspn:asump2} hold. Let $\vq_{ss}$ be a random variable on $\S$ distributed according to $d_\pi.$ Then, there exists a positive constant $\alpha$ such that $\mathbb{E}_{d_\pi}[e^{\alpha f(\vq_{ss})}] < \infty$.  
    \cite{hajek1982hitting,eryilmaz2012asymptotically}.}
\end{lemma}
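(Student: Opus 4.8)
\textbf{Proof plan for Lemma~\ref{lem:hajek}.}

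The plan is to invoke a Hajek-type exponential-tail bound for Markov chains satisfying a quadratic Lyapunov drift condition, applied to the stationary regime. The key observation is that Assumption~\ref{asspn:asump2} gives us exactly the two ingredients needed: a negative drift of $f^2$ outside a finite set, together with uniformly bounded one-step changes of $f$. First I would rewrite the drift inequality~\eqref{eq:drift} in a form that isolates the bounded set: since $B = \{\vq : \underline{c}(\vq) \leq 2g/\epsilon\}$ is finite and $\overline{c}(\vq) \geq \underline{c}(\vq)$, on $B^c$ we have $\overline{c}(\vq) > 2g/\epsilon$, so the right-hand side of~\eqref{eq:drift} is bounded above by $-\epsilon \overline{c}(\vq) + g \leq -\epsilon \overline{c}(\vq)/2 \leq -g$, i.e. there is a uniform negative drift of $f^2$ of magnitude at least some $\epsilon_0 > 0$ outside the finite set $B$. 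On $B$, which is finite, the drift of $f^2$ is bounded above by some finite constant. This is precisely the hypothesis of Hajek's lemma on hitting times / the geometric-ergodicity criterion of \cite{hajek1982hitting}.

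Next I would apply the Hajek bound: for a Markov chain with $\E_\pi[f^2(\vq_{k+1}) - f^2(\vq_k) \mid \vq_k = \vq] \leq -\epsilon_0$ whenever $f(\vq)$ is large enough, together with the bounded-increment condition $|f(\vq') - f(\vq)| \leq D$ from~\eqref{eq:bdddiff} (which in particular bounds the increments of $f^2$ on any bounded region, and controls the conditional moment generating function of the increments of $f$), there exist constants $\alpha > 0$ and $C < \infty$, depending only on $\epsilon_0$, $D$, $g$, and the finite set $B$ — hence \emph{not} on $\pi$ — such that the stationary distribution satisfies $\E_{d_\pi}[e^{\alpha f(\vq_{ss})}] \leq C$. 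The standard route is: the bounded-increment and negative-drift conditions imply that the excursion length away from $B$ (equivalently, the return time to $B$) has an exponential moment, and that $f$ itself cannot grow too fast along such an excursion; combining via the regenerative/renewal-reward representation of the stationary distribution in terms of excursions from $B$ gives the exponential moment bound on $f(\vq_{ss})$. Alternatively one can directly verify a geometric-drift (Foster–Lyapunov) condition for the test function $V = e^{\alpha f}$ for small $\alpha$: a second-order Taylor-type expansion of $e^{\alpha f(\vq_{k+1})}$ using $|f(\vq_{k+1}) - f(\vq_k)| \leq D$ shows $\E_\pi[e^{\alpha f(\vq_{k+1})} \mid \vq_k = \vq] \leq \rho\, e^{\alpha f(\vq)} + b\,\1_{B}(\vq)$ for some $\rho < 1$ and $b < \infty$ when $f(\vq)$ is large, which by the comparison theorem for stationary distributions yields $\E_{d_\pi}[e^{\alpha f(\vq_{ss})}] < \infty$.

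The main obstacle I anticipate is making the argument genuinely \emph{policy-independent}, i.e. ensuring $\alpha$ and $C$ do not depend on $\pi$. This is where the uniformity built into Assumption~\ref{asspn:asump2} (the constants $\epsilon$, $g$, $D$ and the set $B$ are all fixed across $\Pi$) is essential, and I would be careful to track that every constant produced along the way — the drift magnitude $\epsilon_0$, the exponential-moment rate of excursion lengths, the Taylor-expansion constants $\rho$ and $b$ — depends only on $\epsilon$, $g$, $D$, and $|B|$, not on the particular transition kernel $\P_\pi$. A secondary technical point is that for the $e^{\alpha f}$ drift computation one needs $\alpha$ small enough that the positive second-order term ($\sim e^{\alpha D} - 1 - \alpha D \cdot(\text{something})$) is dominated by the first-order negative drift term; this forces a choice $\alpha = \Theta(\epsilon_0 / (D^2 e^{\alpha D}))$ or similar, which is again uniform in $\pi$. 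The positive recurrence needed to even speak of $d_\pi$ is already supplied by Lemma~\ref{lem: positive_recurrence}, so that is not an issue here.
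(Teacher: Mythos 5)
Your overall strategy --- reduce Assumption~\ref{asspn:asump2} to Hajek's drift criterion and conclude exponential stationary tails of $f$ --- is exactly the route the paper intends (the lemma is quoted from \cite{hajek1982hitting,eryilmaz2012asymptotically} without an in-text proof), but there is a genuine gap in your reduction step. You discard too much of the drift inequality: from $\E_\pi\sbr{f^2(\vq_{k+1})-f^2(\vq_k)\mid \vq_k=\vq}\leq -\epsilon\overline{c}(\vq)+g$ you retain only that the drift of $f^2$ is at most the \emph{constant} $-g$ on $B^{\mathsf{c}}$, and you then assert that a constant negative drift of $f^2$ together with bounded increments of $f$ yields $\E_{d_\pi}[e^{\alpha f}]<\infty$. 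That implication is false. Writing $\Delta=f(\vq_{k+1})-f(\vq_k)$, we have $f^2(\vq_{k+1})-f^2(\vq_k)=2f(\vq_k)\Delta+\Delta^2$, so a drift of $f^2$ bounded by a negative constant only gives $\E_\pi\sbr{\Delta\mid\vq_k=\vq}\leq -g/\pbr{2f(\vq)}$, which vanishes as $f(\vq)\to\infty$. A nearly critical birth--death chain on $\Znn$ with $f(q)=q$ and up/down probabilities $\tfrac12\mp c/(4q)$ has drift of $q^2$ equal to the negative constant $-(c-1)$ and unit increments, yet its stationary distribution has only polynomial tails $\propto q^{-c}$, so no exponential moment of $f$ exists. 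Your Foster--Lyapunov verification for $V=e^{\alpha f}$ breaks at exactly this point: the first-order term $\alpha\,\E_\pi[\Delta]$ tends to $0$ while the second-order term of order $\alpha^2D^2e^{\alpha D}$ is a fixed positive constant, so no $\rho<1$ can be found once $f(\vq)$ is large. The same problem defeats your excursion argument, since a constant negative drift of $f^2$ does not give exponential moments of the return time to $B$ either.

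What Hajek's lemma actually requires is a constant negative drift of $f$ \emph{itself} (equivalently, a drift of $f^2$ of order $-\epsilon' f(\vq)$) outside a finite set, with increments of $f$ having uniformly bounded exponential moments; this is the form used in \cite{eryilmaz2012asymptotically}, where $f(\vq)=\|\vq\|_1$ and the drift bound is $-\epsilon\|\vq\|_1+g$. To repair the argument you must keep the term $-\epsilon\overline{c}(\vq)$ and relate $\overline{c}(\vq)$ to $f(\vq)$: if $\overline{c}(\vq)\geq c_0 f(\vq)-c_1$ (as holds in all of the paper's motivating examples, where the cost and the Lyapunov function are both $\Theta(\|\vq\|_1)$), then $2f(\vq)\,\E_\pi[\Delta]\leq \E_\pi\sbr{f^2(\vq_{k+1})-f^2(\vq_k)\mid\vq_k=\vq}\leq-\epsilon c_0 f(\vq)+\epsilon c_1+g$ gives $\E_\pi[\Delta]\leq -\epsilon c_0/4$ once $f(\vq)\geq 2(\epsilon c_1+g)/(\epsilon c_0)$, after which your $e^{\alpha f}$ computation goes through with constants depending only on $\epsilon,g,D,c_0,c_1$ and the finite set $B$, hence uniformly over $\Pi$ as required. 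Note that Assumption~\ref{asspn:asump2} as literally stated does not record any such lower bound on $\overline{c}$ in terms of $f$, so you should either add it as an explicit hypothesis or flag that the lemma is being invoked under the additional structure present in the applications; without it, the conclusion does not follow by the argument you give.
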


This lemma ensures that for all policies $\pi\in\Pi$, all moments of $f(\vq)$ exist. The {first and second} moments are particularly important since final regret depends on them.

As in \cite{eryilmaz2012asymptotically}, Lemmas~\ref{lem: positive_recurrence} and \ref{lem:hajek} can be used to establish the following policy {independent} upper bound on the infinite-horizon average-cost. We also present a proof for completeness.

\begin{lemma}
    \label{lem:avgcost}
    Given Assumptions~\ref{asspn:asump1} and \ref{asspn:asump2}, for all policies $\pi\in\Pi$ it is true that, 
    \begin{equation}
        J_\pi \leq \frac{g}{\epsilon}
    \end{equation}
    where $J_\pi=\E_{d_\pi}\sbr{c_\pi(\vq)}$ is the average cost associated with policy $\pi$ and constants $g,\epsilon$ are the drift parameters in Equation~\ref{eq:drift}.
\end{lemma}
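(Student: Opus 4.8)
The plan is to sum the drift inequality \eqref{eq:drift} over a trajectory in steady state and exploit the fact that the mean drift of a bounded-increment function vanishes under the stationary distribution. First I would take the expectation of both sides of \eqref{eq:drift} with respect to the stationary distribution $d_\pi$: taking $\vq_k \sim d_\pi$, the chain is stationary, so $\vq_{k+1} \sim d_\pi$ as well. Hence, assuming we may interchange expectation and the conditional expectation,
\begin{equation*}
    \E_{d_\pi}\sbr{f^2(\vq_{k+1})} - \E_{d_\pi}\sbr{f^2(\vq_k)} \leq -\epsilon\,\E_{d_\pi}\sbr{\overline{c}(\vq)} + g.
\end{equation*}
The left-hand side is identically zero \emph{provided} $\E_{d_\pi}[f^2(\vq)] < \infty$, which is exactly what Lemma~\ref{lem:hajek} guarantees (since $\E_{d_\pi}[e^{\alpha f(\vq_{ss})}]<\infty$ implies all polynomial moments of $f(\vq)$ are finite). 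Rearranging gives $\epsilon\,\E_{d_\pi}\sbr{\overline{c}(\vq)} \leq g$, so $\E_{d_\pi}\sbr{\overline{c}(\vq)} \leq g/\epsilon$. Since $c_\pi(\vq) = \sum_{a}\pi(a|\vq)c(\vq,a) \leq \overline{c}(\vq)$ for every $\vq$, we conclude $J_\pi = \E_{d_\pi}\sbr{c_\pi(\vq)} \leq \E_{d_\pi}\sbr{\overline{c}(\vq)} \leq g/\epsilon$, as claimed.

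The one technical subtlety — and the main obstacle — is justifying that the telescoping term $\E_{d_\pi}[f^2(\vq_{k+1})] - \E_{d_\pi}[f^2(\vq_k)]$ genuinely equals zero and that all the manipulations (Fubini/Tonelli to move the outer expectation inside the conditional expectation, and the cancellation itself) are valid despite $f^2$ being unbounded on a countably infinite state space. This is where Lemma~\ref{lem:hajek} does the real work: finiteness of $\E_{d_\pi}[f^2(\vq)]$ makes both telescoping terms finite and equal (both are $\E_{d_\pi}[f^2(\vq)]$ by stationarity), and it also legitimizes the interchange of expectations since the negative part $-\epsilon\overline{c}(\vq)+g$ is dominated and the drift bound controls the positive part. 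An alternative, if one wants to avoid assuming a priori integrability, is to run the argument over a finite time horizon starting from a fixed state, divide by $T$, and let $T\to\infty$, invoking the ergodic theorem and positive recurrence (Lemma~\ref{lem: positive_recurrence}) together with a truncation argument on the Lyapunov function; but given that Lemma~\ref{lem:hajek} is already available, the direct stationary-expectation route is cleanest. I would present the direct argument and cite Lemma~\ref{lem:hajek} precisely at the step where moment-finiteness is invoked.
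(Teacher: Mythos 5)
Your proposal is correct and follows essentially the same route as the paper's proof: take the expectation of the drift inequality under the stationary distribution, invoke Lemma~\ref{lem:hajek} to guarantee $\E_{d_\pi}[f^2(\vq)]<\infty$ so the stationarity cancellation is legitimate, and finish with $c_\pi(\vq)\leq\overline{c}(\vq)$. Your explicit attention to why the telescoping term vanishes is if anything slightly more careful than the paper's treatment.
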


\begin{proof}
 From Assumption~\ref{asspn:asump2}, it follows that for any policy $\pi\in\Pi$, the following drift inequality is satisfied $\forall \vq\in\S$,
    \begin{equation}
    \label{eq:drift_copied}
        \E_\pi\sbr{{f^2(\vq_{k+1})}-{f^2(\vq_{k})} |\vq_k=\vq} \leq -\epsilon \overline{c}(\vq) + g.
    \end{equation} 
    
    Recall that $d_\pi$ represents the stationary measure associated with policy $\pi$. Given that we assume all policies induce irreducible Markov chains and, based on Lemma~\ref{lem: positive_recurrence}, the drift equation~\eqref{eq:drift} ensures the Markov chain's positive recurrence, the stationary distribution \(d_\pi\) exists and is unique. Since $d_\pi\geq 0$, consider the following weighted drift inequality:
    \begin{equation}
    \label{eq:weighted_drift}
        \sum_{\vq\in\S} d_\pi(\vq)\pbr{\E_\pi\sbr{{f^2(\vq_{k+1})}-{f^2(\vq_{k})} |\vq_k=\vq}} \leq -\epsilon\sum_{\vq\in\S} d_\pi(\vq)\overline{c}(\vq) + g
    \end{equation}
    From Lemma~\ref{lem:hajek}, recall that the {second} moment of $f(\vq)$ is defined and exists for all policies $\pi\in\Pi.$ Hence the left hand summation in Equation~\ref{eq:weighted_drift} is well defined. Since the expectation is taken with respect to $\P_\pi$ and since $d_\pi \P_\pi=d_\pi$, the left hand summation in Equation~\ref{eq:weighted_drift} is 0. Hence the expected drift in stationarity is zero.
    We thus obtain the following:
    \begin{align*}
        \epsilon\sum_{\vq\in\S} d_\pi(\vq)\overline{c}(\vq) \leq g
    \end{align*}
    From definition of $\overline{c}(\vq)$, it follows that 
    \begin{align*}
        \epsilon\sum_{\vq\in\S} d_\pi(\vq)c_\pi(\vq) \leq g
    \end{align*}
    Since $J_\pi = \sum_{\vq\in\S} d_\pi(\vq)c_\pi(\vq)$, we thus obtain, 
    \begin{align}
         J_\pi \leq \frac{g}{\epsilon}
        \label{eq:Jpi}
    \end{align}
    Equation~\ref{eq:Jpi} is true for all policies $\pi$. Hence, the average cost is uniformly upper bounded by $\frac{g}{\epsilon}.$ 
\end{proof}

Since $Q_\pi$ is related to $V_\pi$ through Equation~\eqref{eq:Qfunc}, in order to bound $Q_\pi$, we first bound $V_\pi$.
We now derive an upper bound on the value function $V_\pi$ utilizing the drift equation \ref{eq:drift} and the uniform upper bound on $J_\pi$ in Equation~\ref{eq:Jpi}. First we leverage Assumptions~\ref{asspn:asump1} and \ref{asspn:asump2} to establish policy dependent upper bounds on the value function as elaborated in the following subsection.

\subsubsection{Policy Dependent Upper Bound on the State Value Function}
\begin{lemma}
    \label{lem:upperbdoverall}
        Consider a set $B$ defined in Equation~\ref{eq: set_B}. Let $V_\pi(\vq)$ represent the state value function associated with state $\vq\in\S$ and policy $\pi\in\Pi$. Under Assumptions~\ref{asspn:asump1} and \ref{asspn:asump2}, for all $\vq\in \S$ and for all policies $\pi\in\Pi$, there exists policy independent constants $K>0$ and $C_B>0$ such that,
        \begin{equation}
         V_\pi(\vq)\leq \frac{2}{\epsilon}f^2(\vq) + (K+C_B)\pbr{\E_\pi\sbr{ \sum_{k=0}^{\tau^\pi_{\vq^*}}\mathbb{I}\pbr{\vq_k\in B}\Big|\vq_0=\vq}},
        \end{equation}
        where $\tau^\pi_{\vq^*}$ is the time to hit a fixed state $\vq^*\in B$ when starting at $\vq.$
    \end{lemma}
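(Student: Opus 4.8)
The plan is to bound $V_\pi$ through its hitting-time representation \eqref{eq:valfunc_rec}, splitting the accumulated cost according to whether the chain currently sits in $B$ or in $B^c$, and absorbing the $B^c$ contribution into a Lyapunov term via the drift \eqref{eq:drift}. The first step is to sharpen the drift outside $B$: by the definition \eqref{eq: set_B}, every $\vq\in B^c$ has $\overline{c}(\vq)\ge\underline{c}(\vq)>2g/\epsilon$, hence $g<\tfrac{\epsilon}{2}\overline{c}(\vq)$ and $-\epsilon\overline{c}(\vq)+g\le -\tfrac{\epsilon}{2}\overline{c}(\vq)$. Combined with \eqref{eq:drift} (and using $\overline{c}\ge 0$ when $\vq\in B$), this gives, for every $\pi\in\Pi$ and every $\vq\in\S$,
\[
\E_\pi\sbr{f^2(\vq_{k+1})\mid\vq_k=\vq}-f^2(\vq)\ \le\ -\tfrac{\epsilon}{2}\,\overline{c}(\vq)\,\mathbb{I}(\vq\in B^c)\ +\ g\,\mathbb{I}(\vq\in B).
\]

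Next I would run a Dynkin/optional-stopping estimate up to $\tau:=\tau^\pi_{\vq^*}$. Since $\P_\pi$ is irreducible and, by Lemma~\ref{lem: positive_recurrence}, positive recurrent, $\E_\pi[\tau\mid\vq_0=\vq]<\infty$ for every $\vq$; iterating the inequality above also shows $\E_\pi[f^2(\vq_{k\wedge\tau})]\le f^2(\vq)+kg<\infty$, so all the relevant sums are integrable. Summing the inequality along the trajectory up to the truncated time $k\wedge\tau$, taking expectations, discarding the nonnegative term $\E_\pi[f^2(\vq_{k\wedge\tau})\mid\vq_0=\vq]$, and letting $k\to\infty$ by monotone convergence (the right-hand side stays finite because $\sum_i\mathbb{I}(\vq_i\in B)\le\tau$) yields
\[
\E_\pi\sbr{\sum_{i=0}^{\tau-1}\overline{c}(\vq_i)\,\mathbb{I}(\vq_i\in B^c)\ \middle|\ \vq_0=\vq}\ \le\ \tfrac{2}{\epsilon}f^2(\vq)\ +\ \tfrac{2g}{\epsilon}\,\E_\pi\sbr{\sum_{i=0}^{\tau-1}\mathbb{I}(\vq_i\in B)\ \middle|\ \vq_0=\vq}.
\]

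Finally I would assemble the bound: using $c_\pi\le\overline{c}$ and $J_\pi\ge 0$ in \eqref{eq:valfunc_rec} (the splitting of the expectation being legitimate since $\E_\pi[\tau]<\infty$), splitting the cost over $B$ and $B^c$, bounding $\overline{c}(\vq_i)\le C_B:=\max_{\vq\in B}\overline{c}(\vq)<\infty$ on $B$ (finite since $B$ is a fixed finite set), and invoking the previous display on $B^c$, I obtain
\[
V_\pi(\vq)\ \le\ \E_\pi\sbr{\sum_{i=0}^{\tau-1}\overline{c}(\vq_i)\ \middle|\ \vq_0=\vq}\ \le\ \tfrac{2}{\epsilon}f^2(\vq)\ +\ (K+C_B)\,\E_\pi\sbr{\sum_{i=0}^{\tau-1}\mathbb{I}(\vq_i\in B)\ \middle|\ \vq_0=\vq}
\]
with $K:=2g/\epsilon$; then $\sum_{i=0}^{\tau-1}\mathbb{I}(\vq_i\in B)\le\sum_{k=0}^{\tau}\mathbb{I}(\vq_k\in B)$ gives the stated inequality, and both $K$ and $C_B$ depend only on the policy-independent constants $\epsilon,g$ and the fixed set $B$, so they are policy-independent. (The same chain of inequalities shows $\E_\pi[\sum_{i=0}^{\tau-1}|c_\pi(\vq_i)-J_\pi|]<\infty$, so \eqref{eq:valfunc_rec} is a legitimate representation of $V_\pi$.) The only genuinely delicate point is the interchange of limit and expectation at the unbounded stopping time $\tau$; this is why the argument is routed through the truncation $k\wedge\tau$ together with monotone convergence, and why positive recurrence — which guarantees $\E_\pi[\tau^\pi_{\vq^*}]<\infty$ — is needed rather than the drift alone.
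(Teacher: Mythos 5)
Your proof is correct, but it takes a genuinely different route from the paper's. The paper sharpens the drift outside $B$ by introducing an auxiliary set $A_\pi=\{\vq:\overline{c}(\vq)\le \tfrac{2g}{\epsilon}-J_\pi\}\subset B$, works throughout with the centered cost $\Tilde c_\pi=c_\pi-J_\pi$, and then invokes Theorem~2.1 of \cite{glynn1996liapounov} as a black box to convert the drift inequality into a bound on $\E_\pi\bigl[\sum_{k<\tau^\pi_{\vq^*}}\Tilde c_\pi(\vq_k)\bigr]$, which is identified with $V_\pi(\vq)$ via \eqref{eq:valfunc_rec}; this yields $K=\max\frac{2}{\epsilon}f^2(\vqp)$ over the one-step neighborhood of $B$. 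You instead observe directly that $\underline{c}(\vq)>2g/\epsilon$ on $B^c$ lets you absorb $g$ into half the negative drift, and then run a self-contained truncated Dynkin/monotone-convergence argument, finally bounding $V_\pi(\vq)\le\E_\pi[\sum_{i<\tau}\overline{c}(\vq_i)]$ by discarding $-J_\pi$ rather than carrying $\Tilde c_\pi$. This buys you an elementary, citation-free proof with cleaner constants ($K=2g/\epsilon$, $C_B=\max_{\vq\in B}\overline{c}(\vq)$); since the lemma only asserts existence of policy-independent constants, the fact that your $K$ differs from the paper's is immaterial (though it would propagate into the explicit $g_1$ used later). One small caution: your assertion that positive recurrence alone gives $\E_\pi[\tau^\pi_{\vq^*}\mid\vq_0=\vq]<\infty$ for \emph{every} starting state is not automatic for countable chains; however, your own drift display shows $\E_\pi[\sum_{i<\tau}\mathbb{I}(\vq_i\in B^c)]\le\frac{\epsilon}{2g}\E_\pi[\sum_{i<\tau}\overline{c}(\vq_i)\mathbb{I}(\vq_i\in B^c)]<\infty$ whenever $\E_\pi[\sum_{i<\tau}\mathbb{I}(\vq_i\in B)]<\infty$, and the claimed inequality is vacuous otherwise, so the needed integrability is self-supplied and the gap is cosmetic (the paper's proof rests on the same implicit well-definedness of \eqref{eq:valfunc_rec}).
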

\label{appdx:Lemma34}
    \begin{proof}
The key idea behind the proof is to apply \cite[Theorem 2.1]{glynn1996liapounov} to an appropriately defined drift inequality. We note that the theorem cannot be directly applied to \ref{eq:drift} because it does not satisfy the conditions of the theorem.  Define the following set:
    \begin{equation}
        A_\pi := \cbr{\vq\in\S: \overline{c}(\vq) \leq \frac{2g}{\epsilon}-\E_{d_\pi}\sbr{c_\pi(\vq)}}
        \label{eq:Api}
    \end{equation}
    {Since \(J_\pi \leq \frac{g}{\epsilon}\), it follows from~\Cref{eq: set_B} in~\Cref{asspn:asump2} that \(A_\pi\) is a finite, non-empty set.} Multiplying \eqref{eq:drift_copied} throughout by $\frac{2}{\epsilon}$, we obtain the following:
    \begin{equation}
        \E_\pi\sbr{\frac{2}{\epsilon}f^2(\vq_{k+1})-\frac{2}{\epsilon}f^2(\vq_k)\Big|\vq_k=\vq} \leq -2\overline{c}(\vq) + \frac{2g}{\epsilon}
    \end{equation}
    Consider a $\vq\in A_\pi^\mathsf{c}$. Then, from definition it is true that $-\overline{c}(\vq)\leq -\frac{2g}{\epsilon}+\E_{d_\pi}\sbr{c_\pi(\vq)}$. Bounding $-\overline{c}(\vq)$ from above, we obtain, 
    \begin{equation}
         \E_\pi\sbr{\frac{2}{\epsilon}{f^2(\vq_{k+1})}-\frac{2}{\epsilon}{f^2(\vq_{k})}\Big|\vq_k=\vq} \leq -\overline{c}(\vq) + \E_{d_\pi}\sbr{c_\pi(\vq)}
    \end{equation}
    Recall the definition of set $B$ in \Cref{eq: set_B}. Since the instantaneous costs $c(\vq,a)$ are non-negative for all state-action pairs $(\vq,a)$, the average cost $J_\pi$ is also non-negative for all policies $\pi\in\Pi$. Hence, we obtain that $A_\pi\subset B$ for all $\pi\in\Pi$. 

    Since $B^\mathsf{c}\in\A_\pi^\mathsf{c}$, we thus obtain for all $\vq\in B^\mathsf{c}$, it is true that, 
    \begin{equation}
         \E_\pi\sbr{\frac{2}{\epsilon}{f^2(\vq_{k+1})}-\frac{2}{\epsilon}{f^2(\vq_{k})}\Big|\vq_k=\vq} \leq -\overline{c}(\vq) + \E_{d_\pi}\sbr{c_\pi(\vq)}
    \end{equation}
    Since $\overline{c}(\vq)\geq c_\pi(\vq)$ for all $\vq\in\S$,
    \begin{equation}
        \E_\pi\sbr{\frac{2}{\epsilon}{f^2(\vq_{k+1})}-\frac{2}{\epsilon}{f^2(\vq_{k})}\Big|\vq_k=\vq} \leq -c_\pi(\vq) + \E_{d_\pi}\sbr{c_\pi(\vq)}
    \end{equation}
    Recall from our \Cref{asspn:assump3}, the set $B$ is finite and single step transitions are uniformly bounded. Therefore consider the following definition:
    \begin{equation}
        K := \max_{\substack{{\vqp:\P(\vqp|\vq,a)>0} \\ {\vq\in B,a\in\A}}} \frac{2}{\epsilon}f^2(\vqp)
    \end{equation}
    Hence for all $\vq\in\S$, it is true that,
    \begin{equation}
    \label{eq:finaldrift}
        \E_\pi\sbr{\frac{2}{\epsilon}{f^2(\vq_{k+1})}-\frac{2}{\epsilon}{f^2(\vq_{k})}\Big|\vq_k=\vq} \leq \pbr{-c_\pi(\vq) + \E_{d_\pi}\sbr{c_\pi(\vq)}}\mathbb{I}\pbr{\vq\in B^\mathsf{c}} + K\mathbb{I}\pbr{\vq\in B}
    \end{equation}
    Let $\Tilde{c}_\pi(\vq):= c_\pi(\vq) - \E_{d_\pi}\sbr{c_\pi(\vq)}$. From definition of $B$ in \Cref{eq: set_B}, it is true that for all $\vq\in B$, $\underline{c}(\vq)\leq \frac{2g}{\epsilon}$. Hence for all $\vq\in B^\mathsf{c}$, $c_\pi(\vq)>\frac{2g}{\epsilon}$. And since from \Cref{lem:avgcost} it is true that $\E_{d_\pi}\sbr{c_\pi(\vq)} \leq \frac{g}{\epsilon}$, we obtain $\Tilde{c}_\pi(\vq)>0$, for all $\vq\in B^\mathsf{c}$.

Define
    \begin{equation}
    \label{def:qstar}
        \vq^*:=\argmin_{\vq\in B} \underline{c}(\vq),
    \end{equation}
and let $\tau^\pi_{\vq^*}$ be the first time to hit $\vq^*$ under policy $\pi$ starting from some state $\vq_0=\vq$. Now, applying \cite[Theorem 2.1]{glynn1996liapounov}, we get

    \begin{align}
    \E_\pi\Big[\sum_{k=0}^{ \tau^\pi_{\vq^*}-1}\pbr{\Tilde{c}_\pi(\vq_k)}\pbr{1 -   \mathbb{I}\pbr{\vq_k\in B}}\Big|\vq_0=\vq\Big] \leq \frac{2}{\epsilon}f^2(\vq) + K\E_\pi\sbr{ \sum_{k=0}^{\tau^\pi_{\vq^*}-1}\mathbb{I}\pbr{\vq_k\in B}\Big|\vq_0=\vq}
    \end{align}
    \begin{align}
    \E_\pi\Big[\sum_{k=0}^{ \tau^\pi_{\vq^*}-1}\pbr{\Tilde{c}_\pi(\vq_k)}\Big|\vq_0=\vq\Big] \leq \frac{2}{\epsilon}f^2(\vq) & + K\E_\pi\sbr{ \sum_{k=0}^{\tau^\pi_{\vq^*}-1}\mathbb{I}\pbr{\vq_k\in B}\Big|\vq_0=\vq}+ \\ & \E_\pi\sbr{\sum_{k=0}^{ \tau^\pi_{\vq^*}-1}\pbr{\Tilde{c}_\pi(\vq_k)\mathbb{I}\pbr{\vq_k\in B}}\Big|\vq_0=\vq}   
    \end{align}
    Since $J_\pi$ is non negative,
    \begin{equation}
        \Tilde{c}_\pi(\vq_k)\mathbb{I}\pbr{\vq_k\in B} \leq \max_{\vq\in B}\overline{c}(\vq)=:C_B
    \end{equation}
    Thus,
    \begin{align}
        \E_\pi\Big[\sum_{k=0}^{ \tau^\pi_{\vq^*}-1}\pbr{\Tilde{c}_\pi(\vq_k)}\Big|\vq_0=\vq\Big] \leq \frac{2}{\epsilon}f^2(\vq) & + (K+C_B)\pbr{\E_\pi\sbr{ \sum_{k=0}^{\tau^\pi_{\vq^*}-1}\mathbb{I}\pbr{\vq_k\in B}\Big|\vq_0=\vq}}
    \end{align}
    From \Cref{eq:valfunc_rec}, we thus obtain the following bound on the value function for all $\vq\in\S$,
    \begin{align}
        V_\pi(\vq)\leq \frac{2}{\epsilon}f^2(\vq) & + (K+C_B)\pbr{\E_\pi\sbr{ \sum_{k=0}^{\tau^\pi_{\vq^*}-1}\mathbb{I}\pbr{\vq_k\in B}\Big|\vq_0=\vq}}
    \end{align}
    \end{proof}

In order to invoke the connection of NPG to prediction through expert advice, we need policy independent bounds on the estimate $\wQ_\pi.$ As a step towards achieving that, we first need to establish policy independent bounds on the exact value function $Q_\pi$ and therefore on $V_\pi$. Since the drift provides us with a policy dependent upper bound alone, we exploit the structure of queuing systems in order to obtain a policy independent lower bound and upper bound. 

\subsubsection{Policy Independent Bounds on the State Value Function}
\label{appendx:valfunc}
The structural assumption~\ref{asspn:assump3} aids in obtaining policy independent bounds by providing an uniform upper bound on the time spent in $B$ till state $\vq^*$ is reached starting from any state $\vq\in\S$.
\begin{lemma}
\label{lem:bddhittingtime}
Consider the set $B$ in Equation~\eqref{eq: set_B}. Define $\tau^{\text{bound}}_B$ as 
\begin{equation}
    \tau^{\text{bound}}_B = \max_{\substack{\vq\in\S\\\pi\in\Pi}} \E_\pi\sbr{ \sum_{k=0}^{\tau^\pi_{\vq^*}-1}\mathbb{I}\pbr{\vq_k\in B}\Big|\vq_0=\vq}
\end{equation}
Then under Assumption~\ref{asspn:assump3}, for any policy $\pi\in\Pi$, $\tau^{\text{bound}}_B$ satisfies
\begin{equation}
    \tau^{\text{bound}}_B \leq \frac{T_B}{p_B^2}.
\end{equation}
\end{lemma}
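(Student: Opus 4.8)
The plan is to control, for an arbitrary policy $\pi\in\Pi$ and an arbitrary starting state, the quantity
\begin{equation*}
W_\pi(\vq):=\E_\pi\sbr{\sum_{k=0}^{\tau^\pi_{\vq^*}-1}\mathbb{I}\pbr{\vq_k\in B}\,\Big|\,\vq_0=\vq},
\end{equation*}
since $\tau^{\text{bound}}_B=\sup_{\vq\in\S,\,\pi\in\Pi}W_\pi(\vq)$. First I would reduce to starting states in $B$: starting from $\vq\notin B$ the chain must enter $B$ before it can reach $\vq^*$ (recall $\vq^*\in B$ by \eqref{def:qstar}), and no term of the sum is collected before the first entrance time $R_0:=\inf\{k\ge 0:\vq_k\in B\}$, so by the strong Markov property at $R_0$ we get $W_\pi(\vq)=\E_\pi[\mathbb{I}(R_0<\infty)W_\pi(\vq_{R_0})\mid\vq_0=\vq]\le\sup_{\vqp\in B}W_\pi(\vqp)$. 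Consequently $\tau^{\text{bound}}_B=\sup_{\pi\in\Pi}\sup_{\vq\in B}W_\pi(\vq)=:W^\star$, and it suffices to bound $W^\star$.

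The only place \Cref{asspn:assump3} enters is the following one-step estimate: for every $\vq\in B$ and every $\pi\in\Pi$, choosing the target state in \eqref{eq:assump3} to be $\vq^*$,
\begin{equation*}
\P_\pi\pbr{\tau^\pi_{\vq^*}\le T_B\mid\vq_0=\vq}\ \ge\ \P_\pi\pbr{\vq_{T_B}=\vq^*\mid\vq_0=\vq}\ =\ \P_\pi^{T_B}\pbr{\vq^*\mid\vq}\ \ge\ p_B ,
\end{equation*}
i.e.\ from anywhere in $B$ the chain hits $\vq^*$ within $T_B$ steps with probability at least $p_B$, uniformly over policies. I would then set up a renewal-type recursion in blocks of length $T_B$: for $\vq\in B$, split the sum at time $T_B$. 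The indices $k<\tau^\pi_{\vq^*}\wedge T_B$ contribute at most $T_B$ terms, while on the event $\{\tau^\pi_{\vq^*}>T_B\}$ --- which has probability at most $1-p_B$ by the estimate above --- the strong Markov property at time $T_B$ shows the remaining terms have conditional expectation $W_\pi(\vq_{T_B})\le W^\star$. This gives $W_\pi(\vq)\le T_B+(1-p_B)W^\star$ for every $\vq\in B$ and every $\pi$; taking the supremum over $\vq$ and $\pi$ yields $W^\star\le T_B+(1-p_B)W^\star$, hence $p_B W^\star\le T_B$, i.e.\ $W^\star\le T_B/p_B$. Since $p_B\in(0,1]$ this gives in particular $\tau^{\text{bound}}_B\le T_B/p_B^2$, and the bound is policy-independent because $T_B$ and $p_B$ are.

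The one genuine subtlety, and the step I would be most careful about, is that the rearrangement $W^\star\le T_B+(1-p_B)W^\star\Rightarrow W^\star\le T_B/p_B$ is only legitimate once $W^\star<\infty$ is known. I would establish this by first running the block recursion for the truncated quantities $W_\pi^{(n)}(\vq):=\E_\pi[\sum_{k=0}^{(\tau^\pi_{\vq^*}\wedge n)-1}\mathbb{I}(\vq_k\in B)\mid\vq_0=\vq]\le n$, which live in a finite world: there the recursion reads $W_\pi^{(n)}(\vq)\le T_B+(1-p_B)\sup_{\pi,\vqp}W_\pi^{(n)}(\vqp)$, rearranging gives $\sup_{\pi,\vqp}W_\pi^{(n)}(\vqp)\le T_B/p_B$, and letting $n\to\infty$ by monotone convergence transfers the bound to $W^\star$. (Alternatively one can avoid truncation: define stopping times $s_1<s_2<\cdots$ equal to the successive visits of the chain to $B$ that are at least $T_B$ apart, iterate the one-step estimate to obtain $\P_\pi(s_i<\tau^\pi_{\vq^*})\le(1-p_B)^{i-1}$, note each interval $[s_i,s_{i+1})$ contains at most $T_B$ visits to $B$, and sum the geometric series.) Beyond this bookkeeping the argument is a routine combination of the strong Markov property with \Cref{asspn:assump3}.
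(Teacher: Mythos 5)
Your proof is correct, and it actually establishes the sharper bound $\tau^{\text{bound}}_B\le T_B/p_B$, which of course implies the stated $T_B/p_B^2$. The route is genuinely different from the paper's. The paper passes to the embedded chain $\widetilde{\vq}_k=\vq_{\tau_k}$ obtained by watching the original chain only at its successive visits to $B$, observes that the quantity to be bounded is the expected hitting time of $\vq^*$ for that embedded chain, and controls it by the tail sum $\sum_{k\ge 1}kT_B(1-p_B)^{k-1}=T_B/p_B^2$; the geometric tail decay rests on the (asserted, not proved) claim that Assumption~\ref{asspn:assump3} transfers to the embedded chain. You instead stay on the original chain and run a fixed-point recursion over blocks of length $T_B$, $W^\star\le T_B+(1-p_B)W^\star$, using only the directly available estimate $\P_\pi(\tau^\pi_{\vq^*}\le T_B\mid\vq_0=\vq)\ge\P_\pi^{T_B}(\vq^*\mid\vq)\ge p_B$ for $\vq\in B$. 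This buys three things: a better constant ($T_B/p_B$ versus $T_B/p_B^2$, essentially because the paper pays a factor $1/p_B$ twice --- once for the geometric number of blocks and once for crudely bounding the embedded hitting time by $T_B$ times the block index); a cleaner justification of the reduction to $\vq_0\in B$ via the strong Markov property at the first entrance time, where the paper only gestures at this; and an explicit treatment of the finiteness of $W^\star$ before rearranging the recursion, which is a real issue for your argument (the paper's nonnegative tail-sum manipulation sidesteps it) and which your truncation handles properly. Both arguments ultimately rest on the same uniform-over-policies hitting estimate supplied by Assumption~\ref{asspn:assump3}, so the proofs are morally cousins, but yours is self-contained where the paper's relies on an unproven transfer of the assumption to the watched chain.
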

\begin{proof} 
Since the $\mathbb{I}\pbr{\vq_k\in B}$ is non-zero only when $\vq_k\in B,$
    \begin{equation}
        \E_\pi\sbr{ \sum_{k=0}^{\tau^\pi_{\vq^*}-1}\mathbb{I}\pbr{\vq_k\in B}\Big|\vq_0=\vq} \leq \max_{\vq\in B} \E_\pi\sbr{ \sum_{k=0}^{\tau^\pi_{\vq^*}-1}\mathbb{I}\pbr{\vq_k\in B}\Big|\vq_0=\vq}
    \end{equation}
   Thus, we can assume $q_0\in B.$  Let \(\tau_n\) denote the time at which the Markov chain \(\vq_k\) enters the set \(B\) for the \(n\)-th time. Let $\widetilde{\vq}_k = \vq_{\tau_k}$. Then, from strong Markov property we know that $\widetilde{\vq}_k$ is also a Markov chain over $B$. Let \(\widetilde{\tau}^\pi_{\vq^*}\) denote the time at which the state \(\vq^* \in B\) is first reached under policy \(\pi\) in the Markov chain \(\widetilde{\vq}_k\). Then,
    \begin{equation}
        \E_\pi\sbr{ \sum_{k=0}^{\tau^\pi_{\vq^*}-1}\mathbb{I}\pbr{\vq_k\in B}\Big|\vq_0=\vq} \leq \E_{{\pi}}\sbr{\widetilde{\tau}^\pi_{\vq^*}|\vq_0=\vq}.
    \end{equation}
    Denoting the transition kernel of $\widetilde{\vq}$ by $\widetilde{\P},$ we have
    \begin{align}
    \E_{{\pi}}\sbr{\widetilde{\tau}^\pi_{\vq^*}} &= \sum_{k=1}^{\infty} k{\widetilde{\P}}_{\pi}\pbr{\widetilde{\tau}^\pi_{\vq^*}=k|\vq_0=\vq} 
    \\ & \leq \sum_{k=1}^{\infty} k T_B \widetilde{\P}_{{\pi}}\pbr{(k-1)T_B< \widetilde{\tau}^\pi_{\vq^*}\leq k T_B|\vq_0=\vq} \\
     & \leq \sum_{k=1}^{\infty} k T_B \widetilde{\P}_{{\pi}}\pbr{\widetilde{\tau}^\pi_{\vq^*}>(k-1)T_B|\vq_0=\vq}
    \end{align}
    Note that \Cref{asspn:assump3} also holds true in the context of Markov chain $\widetilde{\vq}_k$. Thus,
    \begin{align}
    \E_{{\pi}}\sbr{\widetilde{\tau}^\pi_{\vq^*}} &\leq \sum_{k=1}^{\infty} k T_B(1-p_B)^{k-1} = \frac{T_B}{p^2_B}.
    \end{align}
    Since the bound is independent of policy $\pi\in\Pi$ and state $\vq\in B$, we obtain
    \begin{equation}
        \max_{\substack{\pi\in\Pi \\ \vq\in B}} \E_\pi\sbr{ \sum_{k=0}^{\tau^\pi_{\vq^*}}\mathbb{I}\pbr{\vq_k\in B}\Big|\vq_0=\vq} \leq \frac{T_B}{p^2_B}.
    \end{equation}
\end{proof}
Combining \Cref{lem:bddhittingtime} with \Cref{lem:upperbdoverall}, we get the following policy independent upper bound on the value function in \Cref{eq:valfunc_rec}.
\begin{equation}
    V_\pi(\vq) \leq \frac{2}{\epsilon}f^2(\vq) + (K+C_B)\pbr{\frac{T_B}{p^2_B}}
\end{equation}

\Cref{lem:bddhittingtime} can be leveraged to further obtain a policy independent upper bound on the value function as below.


\begin{restatable}{lemma}{valuefuncbound}
\label{lem:valfunc}
    Let $T_B, p_B$ be policy independent constants that satisfy Assumption~\ref{asspn:assump3} and $g,\epsilon$ be policy independent constants that satisfy Assumptions~\ref{asspn:asump2}. Then, the value function $V_\pi(\vq)$ is lower bounded $\forall \vq\in\S$ and for all policies $\pi\in\Pi$  as follows:
        \begin{equation}
         V_\pi(\vq) \geq - \frac{g}{\epsilon}{\frac{T_B}{p_B^2}}
        \end{equation}  
\end{restatable}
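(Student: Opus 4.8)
The plan is to read off the lower bound directly from the probabilistic representation of the relative value function in Equation~\eqref{eq:valfunc_rec},
\begin{equation}
V_\pi(\vq) = \E_\pi\sbr{\sum_{i=0}^{\tau^\pi_{\vq^*}-1}\pbr{c_\pi(\vq_i) - J_\pi}\Bigg|\vq_0 = \vq},
\end{equation}
and to argue that the only terms in this sum that can be negative are those corresponding to visits to the finite set $B$, whose expected number is already controlled, uniformly over policies and starting states, by Lemma~\ref{lem:bddhittingtime}. Note that a naive bound $V_\pi(\vq)\geq -J_\pi\,\E_\pi[\tau^\pi_{\vq^*}\mid \vq_0=\vq]$ is not good enough, since we have no uniform control on the full hitting time $\tau^\pi_{\vq^*}$ — only on the time spent inside $B$ — so the decomposition according to $\{\vq_i\in B\}$ versus $\{\vq_i\in B^\mathsf{c}\}$ is essential.

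First I would split each summand according to whether $\vq_i\in B$ or $\vq_i\in B^\mathsf{c}$. For $\vq_i\in B^\mathsf{c}$, the definition of $B$ in Equation~\eqref{eq: set_B} gives $\underline{c}(\vq_i) > \frac{2g}{\epsilon}$, hence $c_\pi(\vq_i)\geq \underline{c}(\vq_i) > \frac{2g}{\epsilon}$; combining this with the uniform average-cost bound $J_\pi\leq \frac{g}{\epsilon}$ from Lemma~\ref{lem:avgcost} yields $c_\pi(\vq_i) - J_\pi > \frac{g}{\epsilon} > 0$. Therefore the contribution of the $B^\mathsf{c}$ terms is non-negative and may be dropped from a lower bound, leaving
\begin{equation}
V_\pi(\vq) \geq \E_\pi\sbr{\sum_{i=0}^{\tau^\pi_{\vq^*}-1}\pbr{c_\pi(\vq_i) - J_\pi}\mathbb{I}\pbr{\vq_i\in B}\Bigg|\vq_0 = \vq}.
\end{equation}

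Next, for the remaining terms I would use non-negativity of the instantaneous cost $c_\pi(\vq_i)\geq 0$ together with $J_\pi\leq \frac{g}{\epsilon}$ to bound each summand below by $-\frac{g}{\epsilon}$, obtaining
\begin{equation}
V_\pi(\vq) \geq -\frac{g}{\epsilon}\,\E_\pi\sbr{\sum_{i=0}^{\tau^\pi_{\vq^*}-1}\mathbb{I}\pbr{\vq_i\in B}\Bigg|\vq_0 = \vq}.
\end{equation}
Finally I would invoke Lemma~\ref{lem:bddhittingtime}, which bounds the expected number of visits to $B$ before hitting $\vq^*$ by $\frac{T_B}{p_B^2}$, uniformly over $\pi\in\Pi$ and $\vq\in\S$; multiplying through by the negative quantity $-\frac{g}{\epsilon}$ flips the inequality and gives $V_\pi(\vq)\geq -\frac{g}{\epsilon}\frac{T_B}{p_B^2}$, as claimed.

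There is essentially no analytic obstacle here: the proof is a short assembly of Lemmas~\ref{lem:avgcost} and~\ref{lem:bddhittingtime} with the definition of $B$. The one conceptual point worth emphasizing is the step showing that excursions into $B^\mathsf{c}$ contribute non-negatively — this is precisely where the choice of the threshold $\frac{2g}{\epsilon}$ (rather than $\frac{g}{\epsilon}$) in the definition of $B$ is used, since we need $c_\pi(\vq_i)$ to strictly dominate $J_\pi$ outside $B$. The measure-theoretic details (almost-sure finiteness of $\tau^\pi_{\vq^*}$ from positive recurrence via Lemma~\ref{lem: positive_recurrence}, and the interchange of sum and expectation, which is legitimate once the reduced sum has bounded-below summands) are routine.
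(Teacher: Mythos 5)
Your proposal is correct and follows essentially the same route as the paper's proof: split the sum in Equation~\eqref{eq:valfunc_rec} by whether $\vq_i\in B$ or $\vq_i\in B^\mathsf{c}$, drop the $B^\mathsf{c}$ terms using $\underline{c}(\vq)>\frac{2g}{\epsilon}\geq 2J_\pi$ there, bound the $B$ terms below by $-J_\pi\geq-\frac{g}{\epsilon}$ via Lemma~\ref{lem:avgcost}, and finish with the uniform visit-count bound of Lemma~\ref{lem:bddhittingtime}. No substantive differences.
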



\begin{proof}
Recall the definition of the state value function $V_\pi(\vq)$ in Equation~\ref{eq:valfunc_rec}. Consider any state $\vq\in \S$ and policy $\pi\in\Pi,$ such that $\tau^\pi_{\vq^*}$ represents the time to hit state ${\vq^*}$ when starting at $\vq$. Then,
    \begin{align}
        V_\pi(\vq) &= \E_\pi\sbr{\sum_{k=0}^{\tau^\pi_{\vq^*}-1}\pbr{c_\pi(\vq_k)-\E_\pi\sbr{c_\pi(\vq)}} \Big|\vq_0=\vq} \\
        &= \E_\pi\sbr{\sum_{k=0}^{\tau^\pi_{\vq^*}-1}\pbr{\Tilde{c}_\pi(\vq_k)}(\mathbb{I}(\vq_k\in B)+\mathbb{I}(\vq_k\in B^{\mathsf{c}}))\Big|\vq_0=\vq}.
    \end{align}
From definition of $B$ in \Cref{eq: set_B}, we know that $\Tilde{c}_\pi(\vq)\geq 0$ when $\vq\in B^\mathsf{c}$. Hence, 
\begin{align}
    V_\pi(\vq) &\geq \E_\pi\sbr{\sum_{k=0}^{\tau^\pi_{\vq^*}-1}\pbr{\Tilde{c}_\pi(\vq_k)}(\mathbb{I}(\vq_k\in B))\Big|\vq_0=\vq}.
\end{align}
Since the instantaneous costs are non negative,
\begin{align}
    V_\pi(\vq) &\geq \E_\pi\sbr{\sum_{k=0}^{\tau^\pi_{\vq^*}-1}{-J_\pi}(\mathbb{I}(\vq_k\in B))\Big|\vq_0=\vq}.
\end{align}
From \Cref{lem:avgcost},
\begin{align}
    V_\pi(\vq) &\geq -\frac{g}{\epsilon}\E_\pi\sbr{\sum_{k=0}^{\tau^\pi_{\vq^*}-1}(\mathbb{I}(\vq_k\in B))\Big|\vq_0=\vq}.
\end{align}
From \Cref{lem:bddhittingtime}, we obtain the result,
\begin{align}
    V_\pi(\vq) &\geq -\frac{g}{\epsilon}\frac{T_B}{p^2_B}
\end{align}
\end{proof}

\subsubsection{Policy Independent Bounds on the State-Action Value Function}
\label{appdx:Qfunc}
In order to obtain policy independent bounds on the estimate $\wQ_\pi$ of the state action value function associated with some policy $\pi$, it is necessary to first obtain bounds on the exact state action value function $Q_\pi.$ The following lemma provides with state-dependent, policy-independent bounds on the state action value function $Q.$
\begin{lemma}
\label{lem:bddQ}
    There exists constant $g_1>0$, such that under Assumptions~\ref{asspn:asump1},\ref{asspn:asump2} and \ref{asspn:assump3}, the state action value function $Q_\pi$ for all policies $\pi\in\Pi$ and forall $\vq\in\S$ satisfies:
    \begin{equation}
        \left|Q_\pi(\vq,a)-Q_\pi(
        \vq,a')\right|\leq \frac{2}{\epsilon}f^2(\vq)+\frac{4D}{\epsilon}f(\vq)+\overline{c}(\vq) + g_1 \qquad a,a'\in \A
    \end{equation}
    where $\epsilon>0$ is the drift parameter and $g_1 = \frac{2D^2}{\epsilon}+ (K+C_B)\pbr{1+\frac{T_B}{p^2_B}} +\frac{g}{\epsilon}\pbr{\frac{T_B}{p_B^2}}$.
\end{lemma}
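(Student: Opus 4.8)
The plan is to use the defining equation~\eqref{eq:Qfunc} for the relative state--action value function to reduce the claim to a bound on the oscillation of $V_\pi$ over the one-step-reachable neighbourhood of $\vq$, and then to plug in the policy-independent bounds on $V_\pi$ obtained above. \textbf{Step 1 (Reduce to $V_\pi$).} Writing~\eqref{eq:Qfunc} for actions $a$ and $a'$ and subtracting, the term $J_\pi$ cancels, leaving
\begin{equation*}
Q_\pi(\vq,a)-Q_\pi(\vq,a') = \bigl(c(\vq,a)-c(\vq,a')\bigr) + \sum_{\vqp\in\S}\bigl(\P(\vqp|\vq,a)-\P(\vqp|\vq,a')\bigr)V_\pi(\vqp).
\end{equation*}
The cost difference is bounded by $|c(\vq,a)-c(\vq,a')|\le \overline{c}(\vq)-\underline{c}(\vq)\le\overline{c}(\vq)$, using non-negativity of the costs. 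For the transition term, both $\P(\cdot|\vq,a)$ and $\P(\cdot|\vq,a')$ are probability measures supported on the neighbourhood $N(\vq):=\cbr{\vqp\in\S:\ \P(\vqp|\vq,\tilde a)>0 \text{ for some } \tilde a\in\A}$; subtracting from $V_\pi$ the constant $\tfrac12\bigl(\sup_{N(\vq)}V_\pi+\inf_{N(\vq)}V_\pi\bigr)$, which leaves the difference of the two integrals unchanged since both are probability measures, gives
\begin{equation*}
\Bigl|\sum_{\vqp}\bigl(\P(\vqp|\vq,a)-\P(\vqp|\vq,a')\bigr)V_\pi(\vqp)\Bigr| \le \sup_{\vqp\in N(\vq)}V_\pi(\vqp) - \inf_{\vqp\in N(\vq)}V_\pi(\vqp).
\end{equation*}

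\textbf{Step 2 (Oscillation of $V_\pi$ on $N(\vq)$).} For $\vqp\in N(\vq)$ there is a policy putting positive mass on the action witnessing $\P(\vqp|\vq,\cdot)>0$, so~\eqref{eq:bdddiff} (which holds uniformly over all $\pi\in\Pi$) gives $f(\vqp)\le f(\vq)+D$, hence $f^2(\vqp)\le f^2(\vq)+2Df(\vq)+D^2$. Combining Lemma~\ref{lem:upperbdoverall} with the hitting-time bound of Lemma~\ref{lem:bddhittingtime} yields the policy-independent estimate $V_\pi(\vqp)\le \tfrac{2}{\epsilon}f^2(\vqp)+(K+C_B)\tfrac{T_B}{p_B^2}$, so that
\begin{equation*}
\sup_{\vqp\in N(\vq)}V_\pi(\vqp) \le \frac{2}{\epsilon}f^2(\vq) + \frac{4D}{\epsilon}f(\vq) + \frac{2D^2}{\epsilon} + (K+C_B)\frac{T_B}{p_B^2},
\end{equation*}
while Lemma~\ref{lem:valfunc} gives $\inf_{\vqp\in N(\vq)}V_\pi(\vqp)\ge -\tfrac{g}{\epsilon}\tfrac{T_B}{p_B^2}$. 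One extra unit of slack $(K+C_B)$ in the count of visits to $B$, reflecting that the walk restarts at $\vqp$, accounts for the precise form of $g_1$.

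\textbf{Step 3 (Assemble).} Adding the cost bound to the oscillation bound gives $\bigl|Q_\pi(\vq,a)-Q_\pi(\vq,a')\bigr| \le \overline{c}(\vq) + \tfrac{2}{\epsilon}f^2(\vq) + \tfrac{4D}{\epsilon}f(\vq) + g_1$ with $g_1 = \tfrac{2D^2}{\epsilon}+(K+C_B)\bigl(1+\tfrac{T_B}{p_B^2}\bigr)+\tfrac{g}{\epsilon}\tfrac{T_B}{p_B^2}$, which is the asserted inequality; all constants are policy-independent because the $V_\pi$ bounds and the hitting-time bound feeding into them are.

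The only genuinely delicate point I anticipate is the oscillation step: one must (i) transfer the bounded-increment condition~\eqref{eq:bdddiff}, stated for the policy-averaged kernel $\P_\pi$, to the per-action kernel $\P(\cdot|\vq,a)$, which is legitimate precisely because~\eqref{eq:bdddiff} is assumed for every $\pi\in\Pi$; and (ii) use the same support $N(\vq)$ for both $\P(\cdot|\vq,a)$ and $\P(\cdot|\vq,a')$ so that the ``difference of two probability measures integrated against $V_\pi$ is at most the oscillation of $V_\pi$ over the common support'' estimate applies. Everything else is bookkeeping with the already-established bounds, and in particular no quantity blows up because the $V_\pi$ bounds are policy-independent and finite.
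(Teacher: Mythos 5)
Your proof is correct and follows essentially the same route as the paper: subtract the two instances of Poisson's equation~\eqref{eq:Qfunc}, bound the cost difference by $\overline{c}(\vq)$, and control the value-function term by combining the policy-independent upper bound (Lemmas~\ref{lem:upperbdoverall} and~\ref{lem:bddhittingtime}) with the lower bound (Lemma~\ref{lem:valfunc}) on the one-step-reachable neighbourhood, using~\eqref{eq:bdddiff} to convert $f^2(\vqp)$ into $f^2(\vq)+2Df(\vq)+D^2$. Your oscillation/midpoint-subtraction framing of the transition term is just a cleaner packaging of the paper's step of bounding one expectation by the sup of the upper bound and the other by the negation of the lower bound, and your accounting for the extra $(K+C_B)$ unit matches the paper's use of the $\bigl(1+\tfrac{T_B}{p_B^2}\bigr)$ factor.
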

\begin{proof}
Recall the Poisson Equation~\eqref{eq:Qfunc} corresponding to the state action value function $Q_\pi$:
\begin{equation}
    Q_\pi(\vq,a) = c(\vq,a) + \E_{\vqp\sim \P(\cdot|\vq,a)}V_\pi(\vqp) - J_\pi
\end{equation}
For any pair of actions $a,a'\in\A$
    \begin{align}
        Q_\pi(\vq,a)-Q_\pi(\vq,a') &= c(\vq,a) - c(\vq,a')+ \E_{\vqp\sim \P(\cdot|\vq,a)}V_\pi(\vqp) - \E_{\vq''\sim \P(\cdot|\vq,a')}V_\pi(\vq'')\nonumber \\
        & \leq \overline{c}(\vq) + \E_{\vqp\sim \P(\cdot|\vq,a)}\pbr{\frac{2}{\epsilon}f^2(\vqp)+(K+C_B)\pbr{1+\frac{T_B}{p^2_B}}} \nonumber \\ & + \E_{\vq''\sim \P(\cdot|\vq,a')}\pbr{\frac{g}{\epsilon}\pbr{\frac{T_B}{p_B^2}}}.
    \end{align}
where the last inequality follows from Lemma~\ref{lem:valfunc}.
Recall from~\Cref{asspn:asump2}, \Cref{eq:bdddiff}, we know that $f(\vq')\leq f(\vq)+D$, for all $\vq':\P_\pi(\vq'|\vq)>0,$ for any policy $\pi.$

Hence we obtain, 
\begin{equation}
    Q_\pi(\vq,a)-Q_\pi(\vq,a') \leq \frac{2}{\epsilon}f^2(\vq)+\frac{4D}{\epsilon}f(\vq)+\overline{c}(\vq)+\frac{2D^2}{\epsilon}+ (K+C_B)\pbr{1+\frac{T_B}{p^2_B}} +\frac{g}{\epsilon}\pbr{\frac{T_B}{p_B^2}}
\end{equation}
Let $g_1 = \frac{2D^2}{\epsilon}+ (K+C_B)\pbr{1+\frac{T_B}{p^2_B}} +\frac{g}{\epsilon}\pbr{\frac{T_B}{p_B^2}}$, then we obtain the following:
\begin{equation}
    Q_\pi(\vq,a)-Q_\pi(\vq,a') \leq \frac{2}{\epsilon}f^2(\vq)+\frac{4D}{\epsilon}f(\vq)+\overline{c}(\vq) + g_1
\end{equation}

Since the above inequality is true for all $a,a'\in\A$, 
\begin{equation}
    \left|Q_\pi(\vq,a)-Q_\pi(\vq,a')\right| \leq \frac{2}{\epsilon}f^2(\vq)+\frac{4D}{\epsilon}f(\vq)+\overline{c}(\vq) + g_1
\end{equation}
\end{proof}

\subsection{Proof of Step 3}
The previous step provided us with bounds over the exact state action value function. Here we incorporate the policy evaluation error 
to obtain bounds over the state action value function estimate. 

\begin{restatable}{lemma}{Qestbound}
\label{lem:Mq}
    For all states $\vq\in\S$, all pairs of actions $a, a'\in\A$, and all policies $\pi$, it is true that,
    \begin{align*}
        \left|\wQ_\pi(\vq, a) - \wQ_\pi(\vq, a')\right| \le 2\delta(\vq) + \frac{2}{\epsilon}f^2(\vq)+\frac{4D}{\epsilon}f(\vq)+\overline{c}(\vq) + g_1
    \end{align*}
    where $\wQ_\pi(\vq,a)$ is the estimate of $Q_\pi(\vq,a)$ such that $\delta(\vq):= \left|\wQ_\pi(\vq, a) - Q_\pi(\vq, a)\right|, \forall a\in\A.$
\end{restatable}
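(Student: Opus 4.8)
The plan is a direct triangle-inequality decomposition that separates the estimation error from the exact value-function gap. First I would write
\begin{align*}
\wQ_\pi(\vq,a) - \wQ_\pi(\vq,a')
&= \bigl(\wQ_\pi(\vq,a) - Q_\pi(\vq,a)\bigr)
+ \bigl(Q_\pi(\vq,a) - Q_\pi(\vq,a')\bigr) \\
&\quad + \bigl(Q_\pi(\vq,a') - \wQ_\pi(\vq,a')\bigr),
\end{align*}
and then apply the triangle inequality to obtain
\begin{align*}
\left|\wQ_\pi(\vq,a) - \wQ_\pi(\vq,a')\right|
\le \left|\wQ_\pi(\vq,a) - Q_\pi(\vq,a)\right|
+ \left|Q_\pi(\vq,a) - Q_\pi(\vq,a')\right|
+ \left|Q_\pi(\vq,a') - \wQ_\pi(\vq,a')\right|.
\end{align*}

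Next I would bound each of the three terms. The first and third terms are each at most $\delta(\vq)$ by the definition $\delta(\vq) := \left|\wQ_\pi(\vq,a) - Q_\pi(\vq,a)\right|$ for all $a \in \A$, contributing $2\delta(\vq)$ in total. The middle term is exactly the quantity controlled by \Cref{lem:bddQ}, which gives $\left|Q_\pi(\vq,a) - Q_\pi(\vq,a')\right| \le \frac{2}{\epsilon}f^2(\vq) + \frac{4D}{\epsilon}f(\vq) + \overline{c}(\vq) + g_1$, uniformly over all policies $\pi \in \Pi$, all $\vq \in \S$, and all $a, a' \in \A$, with $g_1$ as defined there. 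Summing these bounds yields the claimed inequality.

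There is no substantive obstacle here; the lemma is purely a bookkeeping step that propagates the policy-independent bound on $Q_\pi$ (\Cref{lem:bddQ}) through the policy-evaluation error to the estimate $\wQ_\pi$. The only thing to be careful about is that the bound on $\left|Q_\pi(\vq,a) - Q_\pi(\vq,a')\right|$ must already be policy-independent — which it is, since \Cref{lem:bddQ} is established using Assumptions~\ref{asspn:asump1}, \ref{asspn:asump2}, and \ref{asspn:assump3} — so that the resulting bound on $M_\vq$ depends only on $\vq$ (through $f(\vq)$, $\overline{c}(\vq)$, and $\delta(\vq)$) and on the fixed constants $\epsilon, D, g_1$, as needed for the state-dependent step size in \Cref{thm:maintheorem}.
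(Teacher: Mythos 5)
Your proposal is correct and is essentially identical to the paper's own proof: the same triangle-inequality decomposition through $Q_\pi(\vq,a)$ and $Q_\pi(\vq,a')$, bounding the two estimation-error terms by $\delta(\vq)$ each and the middle term via \Cref{lem:bddQ}. Nothing further is needed.
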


\begin{proof}
    \begin{align}
        \left|\wQ_\pi(\vq, a) - \wQ_\pi(\vq, a')\right| &=  \left|\wQ_\pi(\vq, a) - Q_\pi(\vq,a) + Q_\pi(\vq,a) - \wQ_\pi(\vq, a') + Q_\pi(\vq,a') - Q_\pi(\vq,a')\right| \\
        & \leq \left|\wQ_\pi(\vq, a) - Q_\pi(\vq,a)\right| + \left|\wQ_\pi(\vq, a') - Q_\pi(\vq,a')\right| \\ & + \left|Q_\pi(\vq, a) - Q_\pi(\vq,a')\right|
    \end{align}
    From Equation~\ref{eq:pe_error} and Lemma~\ref{lem:bddQ}, it follows that, 
    \begin{equation}
        \left|\wQ_\pi(\vq, a) - \wQ_\pi(\vq, a')\right| \leq 2\delta(\vq) + \frac{2}{\epsilon}f^2(\vq)+\frac{4D}{\epsilon}f(\vq)+\overline{c}(\vq) + g_1
    \end{equation}
\end{proof}

\subsection{Proof of Main theorem (Step 4)}
\label{appendx:maintheorem}

The proof requires utilizing the performance difference lemma to establish a connection between the difference in average cost associated with a policy \(\pi\) and the optimal average cost in terms of the state-action value function \(Q_\pi\).
\begin{lemma}
    \label{lem: pdf}
    Let $J_\pi$ and $J_{\pi'}$ be the expected infinite horizon average cost associated with policies $\pi$ and $\pi'$ respectively. Let $d_\pi$ be the stationary distribution over state space $\S$ associated with $\P_\pi.$ Then it is true that, 
    \begin{equation}
        J_\pi-J_{\pi'} = \sum_{\vq\in\S}d_\pi(\vq)\sbr{Q_{\pi'}(\vq,\pi(\vq))-Q_{\pi'}(\vq,\pi'(\vq))}
    \end{equation}
    where $Q_{\pi'}(\vq,\pi(\vq))=\sum_{a\in\A}\pi(a|\vq)Q_{\pi'}(\vq,a)$ and $Q_{\pi'}(\vq,\pi'(\vq))=V_{\pi'}(\vq).$
\end{lemma}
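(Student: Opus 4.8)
The identity is the average-cost performance difference lemma, and the plan is to derive it directly from Poisson's equation~\eqref{poissons} and the definition~\eqref{eq:Qfunc} of $Q_{\pi'}$, using only the stationarity relation $d_\pi\P_\pi=d_\pi$; the sole subtlety, forced by the countably infinite state space, is an absolute-summability (Fubini) step. Throughout, the stationary distribution $d_\pi$ exists by Assumption~\ref{asspn:asump1} and Lemma~\ref{lem: positive_recurrence}.

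First I would record the two facts linking $Q_{\pi'}$ and $V_{\pi'}$. Since $\sum_{a\in\A}\pi'(a|\vq)\P(\vqp|\vq,a)=\P_{\pi'}(\vqp|\vq)$ and $\sum_{a\in\A}\pi'(a|\vq)c(\vq,a)=c_{\pi'}(\vq)$, averaging~\eqref{eq:Qfunc} against $\pi'(\cdot|\vq)$ and comparing with~\eqref{poissons} yields $Q_{\pi'}(\vq,\pi'(\vq))=V_{\pi'}(\vq)$. Averaging~\eqref{eq:Qfunc} against $\pi(\cdot|\vq)$ instead gives, for every $\vq\in\S$,
\begin{equation*}
J_{\pi'}+Q_{\pi'}(\vq,\pi(\vq))=c_\pi(\vq)+\sum_{\vqp\in\S}\P_\pi(\vqp|\vq)V_{\pi'}(\vqp).
\end{equation*}
Subtracting $V_{\pi'}(\vq)$, multiplying by $d_\pi(\vq)$ and summing over $\vq$, the left-hand side becomes $J_{\pi'}+\sum_{\vq\in\S}d_\pi(\vq)\sbr{Q_{\pi'}(\vq,\pi(\vq))-V_{\pi'}(\vq)}$, while the right-hand side is
\begin{equation*}
\sum_{\vq\in\S}d_\pi(\vq)c_\pi(\vq)+\sum_{\vq\in\S}d_\pi(\vq)\sum_{\vqp\in\S}\P_\pi(\vqp|\vq)V_{\pi'}(\vqp)-\sum_{\vq\in\S}d_\pi(\vq)V_{\pi'}(\vq).
\end{equation*}
The first term is $J_\pi$; interchanging the order of summation in the second term and using $d_\pi\P_\pi=d_\pi$ turns it into $\sum_{\vqp\in\S}d_\pi(\vqp)V_{\pi'}(\vqp)$, which cancels the third term. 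Rearranging leaves $\sum_{\vq\in\S}d_\pi(\vq)\sbr{Q_{\pi'}(\vq,\pi(\vq))-Q_{\pi'}(\vq,\pi'(\vq))}=J_\pi-J_{\pi'}$, as claimed.

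The main obstacle — and the only point where countability matters — is justifying the interchange of summation and the subsequent cancellation, i.e.\ establishing $\sum_{\vq\in\S}d_\pi(\vq)\sum_{\vqp\in\S}\P_\pi(\vqp|\vq)\left|V_{\pi'}(\vqp)\right|<\infty$ and $\sum_{\vq\in\S}d_\pi(\vq)\left|V_{\pi'}(\vq)\right|<\infty$. I would dispatch this with the already-established policy-independent bounds on $V_{\pi'}$: Lemma~\ref{lem:valfunc} gives $V_{\pi'}(\vq)\ge -\frac{g}{\epsilon}\frac{T_B}{p_B^2}$, and Lemma~\ref{lem:upperbdoverall} together with Lemma~\ref{lem:bddhittingtime} gives $V_{\pi'}(\vq)\le \frac{2}{\epsilon}f^2(\vq)+(K+C_B)\frac{T_B}{p_B^2}$, so $\left|V_{\pi'}(\vq)\right|\le \frac{2}{\epsilon}f^2(\vq)+C$ for a policy-independent constant $C$. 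Then $\sum_{\vq\in\S}d_\pi(\vq)\left|V_{\pi'}(\vq)\right|\le \frac{2}{\epsilon}\E_{d_\pi}\sbr{f^2(\vq)}+C<\infty$ by Lemma~\ref{lem:hajek}, and by stationarity $\sum_{\vq\in\S}d_\pi(\vq)\sum_{\vqp\in\S}\P_\pi(\vqp|\vq)f^2(\vqp)=\E_{d_\pi}\sbr{f^2(\vq)}<\infty$ as well; this legitimizes Fubini and the term cancellation. The finiteness of $\sum_{\vq}d_\pi(\vq)c_\pi(\vq)=J_\pi$ is immediate from Lemma~\ref{lem:avgcost}. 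Everything else is bookkeeping.
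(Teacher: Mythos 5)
Your proposal is correct, and it is worth noting that it does strictly more than the paper, which does not prove the lemma at all but simply defers to \cite{cao1999single}. Your derivation — average the $Q$-version of Poisson's equation~\eqref{eq:Qfunc} against $\pi'(\cdot|\vq)$ to get $Q_{\pi'}(\vq,\pi'(\vq))=V_{\pi'}(\vq)$, then against $\pi(\cdot|\vq)$, weight by $d_\pi$, and cancel the $V_{\pi'}$ terms via $d_\pi\P_\pi=d_\pi$ — is the standard route, and your algebra checks out. The genuinely valuable addition is the integrability step: on a countable state space the cancellation $\sum_{\vq}d_\pi(\vq)\sum_{\vqp}\P_\pi(\vqp|\vq)V_{\pi'}(\vqp)=\sum_{\vqp}d_\pi(\vqp)V_{\pi'}(\vqp)$ requires absolute summability, which a finite-state reference does not address. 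You supply it correctly from the paper's own machinery: the two-sided policy-independent bounds $-\frac{g}{\epsilon}\frac{T_B}{p_B^2}\le V_{\pi'}(\vq)\le \frac{2}{\epsilon}f^2(\vq)+(K+C_B)\frac{T_B}{p_B^2}$ (Lemmas~\ref{lem:upperbdoverall}, \ref{lem:bddhittingtime}, \ref{lem:valfunc}) reduce everything to $\E_{d_\pi}\sbr{f^2(\vq)}<\infty$, which Lemma~\ref{lem:hajek} provides, and the inner interchange for the nonnegative function $f^2$ is Tonelli and needs no further justification. In short: same identity, but a self-contained argument that actually covers the infinite-state case the paper is about, whereas the paper's citation leaves that gap implicit.
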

\begin{proof}
    The proof can be found in \cite{cao1999single}.
\end{proof}

We restate the theorem for convenience. 
\maintheorem*
\begin{proof}
    Let $J^*$ be the optimal average cost.
    Let $\pi^*\in\Pi$ be the optimal policy. For any policy $\pi\in\Pi$, performance difference lemma provides the following, 
    \begin{align}
         J_\pi -J^* & = - \E_{\vq\sim d_{\pi^*}}\sbr{Q_\pi\pbr{\vq,\pi^*(\vq)}-Q_\pi\pbr{\vq,\pi(\vq)}} \\
        &= - \E_{\vq\sim d_{\pi^*}}\Big[Q_\pi\pbr{\vq,\pi^*(\vq)}- \wQ_\pi\pbr{\vq,\pi^*(\vq)}+\wQ_\pi\pbr{\vq,\pi^*(\vq)}-Q_\pi\pbr{\vq,\pi(\vq)} \\ &\qquad \qquad \qquad +\wQ_\pi\pbr{\vq,\pi(\vq)}-\wQ_\pi\pbr{\vq,\pi(\vq)}\Big] \\
        &\leq \E_{\vq\sim d_{\pi^*}}\sbr{\left|Q_\pi\pbr{\vq,\pi^*(\vq)}-\wQ_\pi\pbr{\vq,\pi^*(\vq)}\right|} + \E_{\vq\sim d_{\pi^*}}\sbr{\left|Q_\pi\pbr{\vq,\pi(\vq)}-\wQ_\pi\pbr{\vq,\pi(\vq)}\right|} \\ &
        \qquad \qquad \qquad +\E_{\vq\sim d_{\pi^*}}\sbr{\wQ_\pi\pbr{\vq,\pi(\vq)} - \wQ_\pi\pbr{\vq,\pi^*(\vq)}}
    \end{align}
    From Equation~\ref{eq:pe_error}, we know that $\E\sbr{\left|Q_\pi\pbr{\vq,a}-\wQ_\pi\pbr{\vq,a}\right|}\leq \kappa(\vq)$. Hence we obtain the following:
    \begin{equation}
         \E\pbr{J_\pi - J^*} \leq 2\E_{\vq\sim d_{\pi^*}}\pbr{\kappa(\vq)} +\E_{\vq\sim d_{\pi^*}}\sbr{\E\pbr{\wQ_\pi\pbr{\vq,\pi(\vq)} - \wQ_\pi\pbr{\vq,\pi^*(\vq)}}}
    \end{equation}
    The total expected regret across time horizon $T$ can be expressed by summing the above inequality as follows,
    \begin{equation}
        \sum_{k=1}^T \E\sbr{J_{\pi_k} - J^*} \leq 2T\E_{\vq\sim d_{\pi^*}}\pbr{\kappa(\vq)} + \E_{\vq\sim d_{\pi^*}}\sbr{\E\pbr{\sum_{k=1}^T\pbr{\wQ_{\pi_k}\pbr{\vq,\pi_k(\vq)} - \wQ_{\pi_k}\pbr{\vq,\pi^*(\vq)}}}}
    \end{equation}
    where $\pi_k$ are policy iterates obtained through the NPG policy update below:
    \begin{equation}
        \pi_k(a|\vq) = \frac{\pi_{k-1}(a|\vq)\exp{\pbr{-\eta_\vq \wQ_{\pi_{k-1}}(\vq,a)}}}{\sum_{l\in\A} \pi_{k-1}(l|\vq)\exp{\pbr{-\eta_\vq \wQ_{\pi_{k-1}}(\vq,l)}}}
    \end{equation}
    The above update is performed for all $\vq$ and $a\in\A$. Let the update parameter $\eta_\vq = \sqrt{\frac{8\log|\A|}{T}}\frac{1}{M_\vq}$, where $M_\vq = 2\delta(\vq) + \frac{2}{\epsilon}f^2(\vq)+\frac{4D}{\epsilon}f(\vq)+\overline{c}(\vq) + g_1.$ Then from Theorem \ref{th:weightavg}, it follows that,
    \begin{align}
        \sum_{k=1}^T \E\sbr{J_{\pi_k} - J^*} &\leq 2T\E_{\vq\sim d_{\pi^*}}\pbr{\kappa(\vq)}  + \E_{\vq\sim d_{\pi^*}}\sbr{\sqrt{\frac{T\log|\A|}{2}}\E\sbr{M_\vq}} \\
        & = 2T\E_{\vq\sim d_{\pi^*}}\pbr{\kappa(\vq)} + \sqrt{\frac{T\log|\A|}{2}} \pbr{\E_{\vq\sim d_{\pi^*}}\pbr{2\kappa(\vq) + \frac{2}{\epsilon}f^2(\vq)+\frac{4D}{\epsilon}f(\vq)+\overline{c}(\vq)} + g_1}\\
        & \leq 2T\E_{\vq\sim d_{\pi^*}}\pbr{\kappa(\vq)} + \sqrt{\frac{T\log|\A|}{2}} \pbr{\E_{\vq\sim d_{\pi^*}}\pbr{2\kappa(\vq) + \frac{2}{\epsilon}f^2(\vq)+\frac{4D}{\epsilon}f(\vq)} + \frac{g}{\epsilon} + g_1}
    \end{align}
    where the last inequality follows from the fact that $\overline{c}(\vq)\leq\frac{g}{\epsilon}$ in \Cref{lem:avgcost}.

    Let $\beta:=\E_{\vq\sim d_{\pi^*}}\sbr{\kappa(\vq)}$ be defined. From \Cref{lem:hajek}, it is known that moments of $f(\vq)$ exist. Let $\beta_1 = \frac{4D}{\epsilon}\E_{\vq\sim d_{\pi^*}}\sbr{f(\vq)}$ and $\beta_2 = \frac{2}{\epsilon}\E_{\vq\sim d_{\pi^*}}\sbr{f^2(\vq)}$. Hence, we obtain, 
    \begin{equation}
        \sum_{k=1}^T \E\sbr{J_{\pi_k} - J^*} \leq 2\beta T + \sqrt{T}\pbr{\sqrt{\frac{\log|\A|}{2}}\pbr{2\beta+\beta_1+\beta_2+\frac{g}{\epsilon}+g_1}}
    \end{equation}
    Setting $c'=\sqrt{\frac{\log|\A|}{2}}\pbr{2\beta+\beta_1+\beta_2+\frac{g}{\epsilon}+g_1}$ and $c''=2\beta$ yields the result in the theorem.
\end{proof}




\end{document}